\documentclass{article}

\PassOptionsToPackage{numbers, compress}{natbib}

\usepackage[preprint]{neurips_2021}

\usepackage[utf8]{inputenc} %
\usepackage[T1]{fontenc}    %
\usepackage{hyperref}       %
\usepackage{url}            %
\usepackage{booktabs}       %
\usepackage{amsfonts}       %
\usepackage{nicefrac}       %
\usepackage{microtype}      %
\usepackage{xcolor}         %

\usepackage{graphicx}
\usepackage{subcaption}

\usepackage{bbm}
\usepackage{amsmath}
\usepackage{amsthm}
\usepackage[english]{babel}

\usepackage{multirow}
\usepackage{makecell}

\usepackage{algorithm}
\usepackage{algorithmic}

\usepackage{wrapfig}

\newtheorem{problem}{Problem statement}

\newtheorem{definition}{Definition}
\newtheorem{theorem}{Theorem}

\newenvironment{customthm}[1]
  {\innercustomthm}
  {\endinnercustomthm}

\def\Ds{\mathcal{D}}

\def\Hs{\mathcal{H}}
\def\Is{\mathcal{I}}

\def\Ts{\mathcal{T}}
\def\Tsh{\hat{\mathcal{T}}}

\def\Xs{\mathcal{X}}
\def\Ys{\mathcal{Y}}

\def\Gs{\mathcal{G}}

\def\Ps{\mathcal{P}}

\def\eh{\hat{e}}

\def\ph{\hat{p}}

\def\gh{\hat{g}}

\def\ph{\hat{p}}
\def\wh{\hat{w}}
\def\yh{\hat{y}}

\def\alphah{\hat{\alpha}}
\def\betah{\hat{\beta}}
\def\gammah{\hat{\gamma}}

\def\ie{\emph{i.e., }}
\def\eg{\emph{e.g., }}

\def\ours{Hoki}

\newcommand\revised[1]{{#1}}

\newcommand\eat[1]{}

\title{Confidence Calibration Using Logits Transformations}

\author{%
  Sooyong Jang
  \\
  Dept. of Computer and Info. Science\\
  PRECISE Center\\
  University of Pennsylvania\\
  Pennsylvania, PA 15213 \\
   \And
  Radoslav Ivanov\\
  Dept. of Computer and Info. Science\\
  PRECISE Center\\
  University of Pennsylvania\\
  Pennsylvania, PA 15213 \\
   \AND
  Insup Lee\\
  Dept. of Computer and Info. Science\\
  PRECISE Center\\
  University of Pennsylvania\\
  Pennsylvania, PA 15213 \\
   \And
  James Weimer\\
 Dept. of Computer and Info. Science\\
  PRECISE Center\\
  University of Pennsylvania\\
  Pennsylvania, PA 15213 \\
}

\begin{document}
\maketitle

\begin{abstract}
As machine learning techniques become widely adopted in new domains, especially in safety-critical systems such as autonomous vehicles, it is crucial to provide accurate output uncertainty estimation.
As a result, many approaches have been proposed to calibrate neural networks to accurately estimate the likelihood of misclassification. 
However, while these methods achieve low calibration error, there is space for further improvement, especially in large-dimensional settings such as ImageNet.
In this paper, we introduce a calibration algorithm, named \ours, that works by applying random transformations to the neural network logits. We provide a sufficient condition for perfect calibration based on the number of label prediction changes observed after applying the transformations.
We perform experiments on multiple datasets and show that the proposed approach generally outperforms state-of-the-art calibration algorithms across multiple datasets and models, especially on the challenging ImageNet dataset.
Finally, \ours{} is scalable as well, as it
requires comparable execution time to that of temperature scaling.

\eat{
As machine learning techniques become widely adopted in new domains, especially in safety-critical systems such as autonomous vehicles, it is crucial to provide accurate output uncertainty estimation.
As a result, many approaches have been proposed to calibrate neural networks and to estimate the likelihood of the network being wrong on a given example. However, while these methods achieve low expected calibration error (ECE), few techniques provide theoretical performance guarantees.
In this paper, we provide a novel calibration algorithm with a theoretical bound on the ECE. Our approach works by transforming the logits and/or inputs and recursively performing calibration leveraging the information from the corresponding change in the neural network output.
We perform experiments on multiple datasets, including ImageNet, and show that the proposed approach greatly outperforms state-of-the-art calibration algorithms.  Additionally, our theoretical bounds on ECE is shown to be tight, within $0.1\%$ of the empirical ECE.
}

\end{abstract}

\section{Introduction}
\label{sec:introduction}

\begin{wrapfigure}{r}{0.43\textwidth}
    \vskip -0.28in
    \centering
    \includegraphics[width=0.35\textwidth]{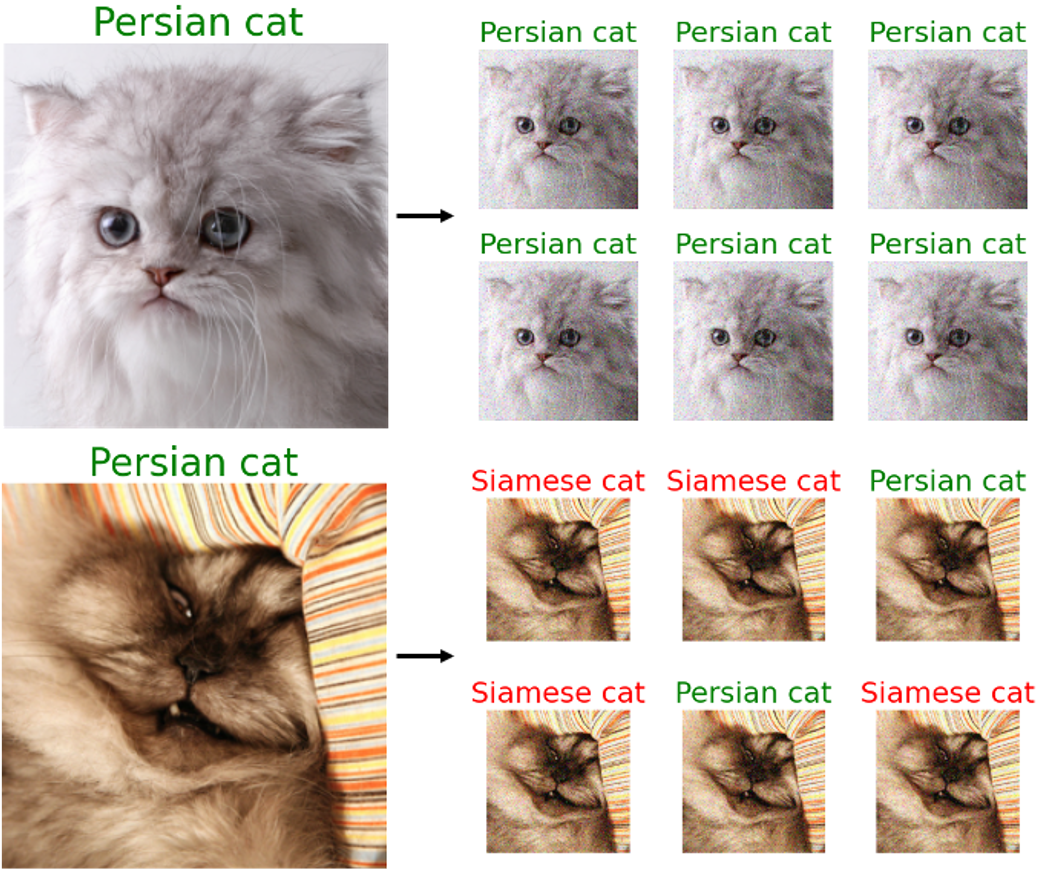}
    \caption{Random noise transformations may lead to a different number of label prediction changes for different images.  Here we apply six different transformations sampled from $N(0, 0.16^2)$.}
    \label{fig:ex_trans}
\end{wrapfigure}

Deep neural networks have proven useful in various fields such as image classification \cite{he2016deep, zagoruyko2016wide, tan2019efficientnet}, object detection \cite{redmon2018yolov3, bochkovskiy2020yolov4}, and speaker verification \cite{li2017deep}.
Motivated by these successes, deep neural networks are now being integrated into safety-critical systems such as medical systems~\cite{ronneberger2015u, arcadu2019deep}.
However, as observed by \citet{guo2017calibration}, neural networks are often over-confident on their predictions. This over-confidence can be a critical problem in the safety-critical applications where over-confident neural networks can be wrong with high confidence.

Various calibration techniques have been proposed to alleviate the miscalibration issue.
A common approach is to map uncalibrated logits or confidences to calibrated ones ~\cite{guo2017calibration, gupta2020calibration, jang2020improving, patel2020multi, platt1999probabilistic, kull2019beyond, rahimi2020intra, wenger2020non}. Another option is to train a neural network with a modified loss function to produce calibrated confidence~\cite{tran2019calibrating, kumar2018trainable, seo2019learning}.
\revised{Although existing methods successfully improve the confidence in terms of calibration metrics such as the expected calibration error (ECE)~\cite{naeini2015obtaining}, there is further space for improvement, especially in high-dimensional settings where vast amounts of data are required for good calibration.}

In this work, we propose \ours{}, a calibration algorithm
that achieves strong empirical performance as compared with state-of-the-art techniques. 
The intuition behind \ours{} is illustrated in Figure~\ref{fig:ex_trans}.
Suppose we are given two images of Persian cats that are initially correctly classified and suppose that we randomly perturb each image a number of times. 
\begin{wrapfigure}{l}{0.5\textwidth}
    \vskip 0.2in
    \centering
    \resizebox{0.5\textwidth}{!}{
    \includegraphics{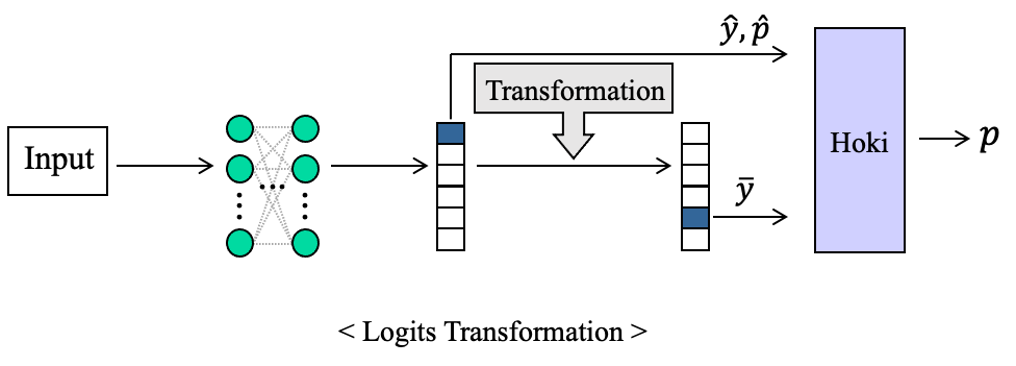}
    }
    \caption{Proposed calibration method. We use logit transformations and corresponding label changes to calibrate the original confidence.}
    \label{fig:overall_process}
\end{wrapfigure}
If all perturbed versions of the first image are still correctly classified whereas the second image transformations lead to label switches (\eg{} to Siamese cat), then intuitively we would have higher confidence in the first image's label.
More generally, the idea, as shown in Figure~\ref{fig:overall_process}, is to use logit transformations to test the network's sensitivity and group examples according to the proportion of observed label switches.

Based on this intuition, we first present a sufficient condition for perfect calibration in terms of ECE through leveraging the information from changes after transformations.
An added benefit of this theoretical result is that it also leads to a natural implementation, through minimizing the empirical calibration error (including transformations). The proposed algorithm is also efficient in terms of runtime, especially in the case of logit transformations due to their reduced dimensionality (as compared to the input space dimensionality).

To evaluate \ours{}, we perform experiments on MNIST, CIFAR10/100, and ImageNet, and we use several standard models per dataset, including LeNet5, DesneNet, ResNet, and ResNet SD.
On these datasets, we compare \ours{} with multiple state-of-the-art calibration algorithms, namely temperature scaling~\cite{guo2017calibration}, MS-ODIR, Dir-ODIR~\cite{kull2019beyond}, 
ETS, IRM, IROvA-TS~\cite{zhang2020mix}, and ReCal~\cite{jang2020improving}.
In terms of ECE~\cite{naeini2015obtaining}, \ours{} outperforms other calibration algorithms on \revised{8} out of 15 benchmarks; we emphasize that \ours{} achieves 
lower ECE than other methods on all ImageNet models. Finally, in terms of learning time, \ours{} achieves similar performance to temperature scaling, the fastest algorithm we compared with.

The main contributions of this paper are as follows:

\begin{itemize}
    \item we propose \ours, an iterative calibration algorithm using logit transformations;
    \item we provide a sufficient condition for perfect calibration;
    \item we show that \ours\ outperforms other calibration algorithms on 8
    out of 15 benchmarks, and achieves the lowest ECE on all ImageNet models;
    \item we demonstrate that \ours{} is also efficient in terms of learning time, achieving performance comparable to temperature scaling.
\end{itemize}

This paper is structured as follows.
In Section \ref{sec:related_work}, we summarize the related works, and in Section \ref{sec:problem_statement}, we describe the problem statement.
Then, we present the theory of calibration through transformations in Section \ref{sec:cal}. 
In Section \ref{sec:trans_selection}, we demonstrate the transformation selection process, and we illustrate the calibration algorithm using transformations in Section \ref{sec:implementation}.
In Section \ref{sec:experiments}, we show the experimental results, and we conclude this paper in Section \ref{sec:conclusion}.
\section{Related work}
\label{sec:related_work}

Various approaches have been tried for obtaining accurate confidences, and in this paper, we consider the papers most related to our approach.
We review post-hoc calibration techniques, transformation based calibration, and also calibration with a theoretical guarantee.

\textbf{Post-hoc Calibration.}
Many researchers have proposed calibration methods for a neural network classifier so that the predicted probabilities match the empirical probabilities using a validation set \cite{gupta2020calibration, patel2020multi, platt1999probabilistic, guo2017calibration, kull2019beyond, rahimi2020intra, tran2019calibrating, zhang2020mix}. 
\revised{These post-hoc approaches learn a (simpler) mapping function for the calibration without re-training the given (complex) classifier, and different functions for the different types of inputs have been proposed for the mapping.
For example, temperature scaling \citep{guo2017calibration} uses a linear function on logits, Platt scaling \citep{platt1999probabilistic} employs an affine function on logits, and Dirichlet calibration \citep{kull2019beyond} trains an affine function on log logits.}

\textbf{Transformation-based calibration.}
Approaches using input transformations for calibration have also been proposed.
\citet{bahat2020classification} apply semantic preserving transformations such as contrast change, rotation and zoom to augment given inputs and compute confidence using the augmented set of inputs.
\citet{jang2020improving} introduce a lossy label-invariant transformation for calibration. They define a lossy label-invariant transformation and use it to group inputs and apply group-wise temperature scalings.
While our method is also based on transformations, the choice of random transformations allows us to obtain a theoretical guarantee for perfect calibration.

\textbf{Calibration with theoretical guarantee. }
\citet{kumar2019verified} propose a scaling-binning calibrator which combines Platt Scaling and histogram binning, and provide a bound on the calibration error.
\citet{park2020pac} propose a calibrated prediction method which provides per-prediction confidence bound using Clopper-Pearson interval based on histogram binning.
With this method, examples in the same bin are assigned the same confidence, whereas \ours{} assigns a different confidence to each example, while also aiming to achieve the sufficient condition for perfect calibration within the bin.

\section{Problem Statement}
\label{sec:problem_statement}

Let $\Xs$ be a feature space, $\Ys = \{1, \dots, C\}$ be a set of labels, and $\Ds$ be a distribution over $\Xs \times \Ys$.
We are given a classifier $f: \Xs  \to \Ys$ and a corresponding calibrator $g: \Xs \to [0,1]$ such that, for a given example $x$, $(f(x), g(x)) = (\yh,\ph)$ is the label prediction $\yh$ with a corresponding confidence $\ph$. In what follows, we say that the sets $\Ps_1, \dots, \Ps_J$ form a bin partition of the confidence space, $[0,1]$, if $\cup_{i=1}^J \Ps_i = [0,1]$ and $\forall i \neq j, \; \Ps_i \cap \Ps_j = \emptyset$. Furthermore, given a dataset $Z = \{(x_1, y_1), \dots, (x_N, y_N)\}$, we say $g$ induces an index partition $\Is_1, \dots, \Is_J$ of $\{1,\dots,N\}$ such that $g(x_n) \in \Ps_j \iff n \in \Is_j, \forall (x_n,y_n) \in Z$.

Before formally stating the problem considered in this work, we define calibration error (CE) and expected calibration error (ECE) \cite{naeini2015obtaining}.
\begin{definition}[Calibration Error (CE)]
\label{def:ce}
For any calibrator $g$ and confidence partitions $\Ps_1, \dots, \Ps_J$, the calibration error (CE) is defined as
\begin{equation*}
CE(g) = \sum_{j = 1}^{J}  w_j  \left| e_j \right|,
\end{equation*}
where
\begin{align*}
& \; e_{j} := P_{\Ds}\left[Y = f(X)\; \middle| \; g(X) \in \Ps_j \right] - E_{\Ds}\left[g(X) \mid g(X) \in \Ps_j \right]\\
& \; w_{j} := P_{\Ds}\left[g(X) \in \Ps_j\right].
\end{align*}
\end{definition}
Intuitively, the CE of a classifier-calibrator pair in a given partition is the expected difference between the classifier's accuracy and the calibrator's confidence. To get the CE over the entire space, we sum up all the individual partition CEs, weighted by the probability mass of each partition (i.e., the probability of an example falling in that partition).
\begin{definition}[Expected Calibration Error (ECE)]
\label{def:ece}
For any calibrator $g$, confidence partitions $\Ps_1, \dots, \Ps_J$, sampled dataset $Z \in (\Xs \times \Ys)^N$, and induced index partition $\{ \Is_1, \dots , \Is_J \}$,
we define the expected calibration error (ECE) as
\begin{equation*}
ECE(g) = \sum_{j = 1}^{J} \wh_j \left| \eh_j \right|,
\qquad
\text{where}\qquad
 \eh_j := \sum_{n \in \Is_j} \frac{\mathbbm{1}_{\{y_n = \yh_n\}} - g(x_n) }{| \Is_j| } \quad \mbox{and} \quad \wh_j := \frac{|\Is_j|}{N}.
\end{equation*}

\end{definition}
Thus, the ECE is the sampled version of the CE. Note that Definition~\ref{def:ece} is equivalent to the standard ECE definition, as used in prior work~\cite{guo2017calibration}. We are now ready to state the problem addressed in this work, namely find a calibrator $\gh$ that minimizes the ECE over a validation set.
\begin{problem}
\label{ps:main_problem}
Let $\Gs = \{g : \Xs \to [0,1]\}$ be the set of all calibrators. We aim to find $\gh \in \Gs$ that minimizes the expected calibration error,
\begin{equation*}
\gh = \arg \min_{g \in \Gs} ECE(g). \\
\end{equation*}

\end{problem}
\section{Calibration using Transformations}
\label{sec:cal}

This section provides the intuition and theory of using transformations for the purpose of calibration. 
We begin by providing high level intuition, followed by a sufficient condition for perfect calibration in expectation, which leads to a natural implementation as well.

\paragraph{High-level intuition.} Suppose that $f$ and $g$ form a classifier-calibrator pair. If we take a correctly classified image of a cat, for example, we would expect that the classification confidence would drop as we apply random transformations to the image (e.g., add noise, zoom out). Conversely, if the confidence does not decrease, we would conclude that $f$ and $g$ are not properly calibrated.

More generally, the goal of applying transformations is to group examples in bins of similar confidence. In particular, if a certain set of examples exhibits similar transformation patterns (e.g., label switching, misclassification), then the calibrator should learn to assign such examples a similar confidence value. Of course, this approach would only work for a good choice for transformations -- we discuss a number of options in Section~\ref{sec:trans_selection}.

\paragraph{Sufficiency for perfect calibration.}
We now investigate calibrator properties that ensure perfect calibration. Suppose we are given a class of transformations $\Ts = \{t: \Xs \to \Xs\}$, e.g., functions that add random noise, and a corresponding probability distribution $\Ds_T$ over $\Ts$. Then, for each example $(x,y)$, we can apply a number of transformations and observe how many transformations lead to a label switch. Specifically, the following result is key to achieving perfect calibration.

\begin{theorem}[Sufficiency for Perfect Calibration]
\label{thm:sufficiency}
Let $\Ps_1, \dots, \Ps_J$ be a confidence bin partition. A calibrator $g \in \Gs$ is perfectly calibrated, i.e., $CE(g) = 0$, if it satisfies, $\forall j \in \{1, \dots, J\}$,
\begin{align*}
E_{\Ds}\left[ g(X) \mid g(X) \in \Ps_{j} \right] =  \alpha_{j} \gamma_j + \beta_{j} (1 - \gamma_j) &\\
\end{align*}
where
\begin{align*}
\alpha_{j} =& P_{\Ds \times \Ds_T}[f(X) = Y \mid f(T(X)) = f(X), g(X) \in \Ps_j ] \\
\beta_{j} =& P_{\Ds \times \Ds_T}[f(X) = Y \mid f(T(X)) \neq f(X), g(X) \in \Ps_j ] \\
\gamma_j =& P_{\Ds \times \Ds_T}\left[f(T(X)) = f(X) \mid g(X) \in \Ps_{j} \right] &\\
\end{align*}
\end{theorem}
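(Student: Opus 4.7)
The plan is to show directly that under the hypothesis every per-bin calibration gap $e_j$ vanishes, so that $CE(g) = \sum_j w_j |e_j| = 0$. Fix an arbitrary bin index $j$. By Definition~\ref{def:ce},
\begin{equation*}
e_j = P_{\Ds}[Y = f(X) \mid g(X) \in \Ps_j] - E_{\Ds}[g(X) \mid g(X) \in \Ps_j],
\end{equation*}
so the hypothesis already supplies a formula for the second term, and it only remains to rewrite the first term in the same form $\alpha_j\gamma_j + \beta_j(1-\gamma_j)$.

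The key step is a law-of-total-probability decomposition on the product space $\Ds \times \Ds_T$, splitting according to the event $\{f(T(X)) = f(X)\}$. Working under the product measure,
\begin{align*}
P_{\Ds \times \Ds_T}[f(X) = Y \mid g(X) \in \Ps_j]
&= P_{\Ds \times \Ds_T}[f(X) = Y \mid f(T(X)) = f(X), g(X) \in \Ps_j]\, \gamma_j \\
&\quad + P_{\Ds \times \Ds_T}[f(X) = Y \mid f(T(X)) \neq f(X), g(X) \in \Ps_j]\, (1-\gamma_j) \\
&= \alpha_j \gamma_j + \beta_j (1 - \gamma_j),
\end{align*}
which matches the shape of the hypothesis exactly.

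The small but essential bookkeeping step is to observe that the event $\{f(X) = Y\} \cap \{g(X) \in \Ps_j\}$ does not involve $T$ at all, so integrating out $T$ with respect to $\Ds_T$ reduces the left-hand side to the pure marginal $P_{\Ds}[Y = f(X) \mid g(X) \in \Ps_j]$. Combining this identity with the assumed formula for $E_{\Ds}[g(X) \mid g(X) \in \Ps_j]$ yields $e_j = 0$. Since $j$ was arbitrary and $|e_j| = 0$ for every bin, $CE(g) = 0$.

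The only real obstacle is the measure-theoretic subtlety just mentioned, namely ensuring the passage from $P_{\Ds \times \Ds_T}$ to $P_{\Ds}$ is legitimate for the accuracy term but must be kept under $\Ds \times \Ds_T$ for the conditional probabilities $\alpha_j, \beta_j, \gamma_j$, which involve $T$ explicitly. Once this is handled, the argument is a one-line conditioning identity, and no structural property of the transformation class $\Ts$ beyond the definition of $\alpha_j, \beta_j, \gamma_j$ is used.
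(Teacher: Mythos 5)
Your proof is correct and follows essentially the same route as the paper's: a law-of-total-probability decomposition of the per-bin accuracy over the event $\{f(T(X)) = f(X)\}$, yielding $\alpha_j\gamma_j + \beta_j(1-\gamma_j)$, which combined with the hypothesis forces $e_j = 0$ for every bin. Your explicit remark that the accuracy event does not involve $T$, so the product-measure probability reduces to the $\Ds$-marginal, is a small clarification the paper leaves implicit.
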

\begin{proof}
Proof provided in the supplementary material.
\end{proof}

Intuitively, Theorem~\ref{thm:sufficiency} states that the label switching that we observe (in each bin) due to added transformations must be consistent with the confidence and accuracy in that bin. In particular, the average confidence in the bin must be equal to the weighted sum of accuracies over the two groups of examples: 1) examples whose label is changed by some transformation; 2) examples whose label is not changed due to transformations. The benefit of Theorem~\ref{thm:sufficiency} is that it leads to a natural implementation by estimating all probabilities given a validation set (as discussed in Section~\ref{sec:implementation}). In the Supplementary Material, we also provide a theoretical bound (Theorem 2) on the generalization ECE (given new data) of \ours{} in a probably approximately correct sense.

\section{Transformation Selection}
\label{sec:trans_selection}
As discussed in Section~\ref{sec:cal}, the choice of transformations greatly affects the benefit of the result presented in Theorem~\ref{thm:sufficiency}. In particular, if a certain transformation results in a label switch for all examples, then it does not provide any useful confidence information. Thus, the most beneficial transformations are those that separate different examples into different partitions, as measured by the proportion of label switches caused by those transformations.

\paragraph{Choosing a class of transformations.} The first consideration when selecting a transformation is whether to apply it to the input $x$ or to some internal classifier representation, \eg the logits in last layer of a neural network. The benefits of applying input transformations are that they are independent of the classifier and can be chosen based on physical characteristics (\eg a small rotation should not affect an image's class). Applying transformations to the logits is also appealing due to the reduced dimensionality: this results in improved scalability and makes it easier to find useful transformations.%

Another consideration when choosing the transformations is what family to select them from. Input transformations offer a wide range of possibilities, especially in the case of images, e.g., rotation, translation, zoom out. On the other hand, logit transformations do not necessarily have a physical interpretation, so a more natural choice is to add noise selected from a known probability distribution, \eg Gaussian or uniform. In this paper, we explore the space of uniform noise \revised{and Gaussian noise}, as applied to the neural network's logits, in order to benefit from the scalability improvements due to logit transformations.
\paragraph{Parameter selection.} \revised{As discussed above, the noise parameters need to be chosen so as to maximize the benefit of using transformations. One way of measuring the effect of a given transformation is by computing the standard deviations of (non-calibrated) confidences over the entire validation set. Intuitively, if a transformation results in a large standard deviation of confidences, that means this transformation is correlated with the classifier's sensitivity to input perturbations and hence with the confidence in the classifier's correctness. 
Therefore, we aim to identify transformations that maximize the standard deviation of confidences over the validation data.}

\revised{To compute the variance in predicted confidences for a specific transformation distribution $\Ds_T$, we use the sufficient condition presented in Theorem~\ref{thm:sufficiency}. In particular, suppose we are given a validation set $Z = (x_1, y_1), \dots, (x_N, y_N)$ and a sampled set of transformations $T = \{t_1, \dots, t_M\} \sim [\Ds_T]^M$.
Let $\Is_1, \dots, \Is_J$ be an index partition.\footnote{Note that, when choosing the noise parameters, we use a single bin for all data. Equations~\eqref{eq:alphah}-\eqref{eq:ph} are written for an arbitrary partition since they are referenced in Section~\ref{sec:implementation} as well.} Then, estimates of $\alpha$, $\beta$, $\gamma$, and calibrated confidence, $p$, can be calculated as}
\begin{align}
\label{eq:alphah}
\alphah_{j} =&  \frac{ \displaystyle\sum_{n \in \Is_j} \sum_{m=1}^M \mathbbm{1}_{\{f(x_n) = y_n\}}  \mathbbm{1}_{\{f(t_m(x_n)) = f(x_n)\}} }{ \displaystyle \sum_{n \in \Is_j} \sum_{m=1}^M \mathbbm{1}_{\{f(t_m(x_n)) = f(x_n)\}} } \\
\label{eq:betah}
\betah_{j} =& \frac{ \displaystyle \sum_{n \in \Is_j} \sum_{m=1}^M \mathbbm{1}_{\{f(x_n) = y_n\}}  \mathbbm{1}_{\{f(t_m(x_n)) \neq f(x_n)\}} }{ \displaystyle \sum_{n \in \Is_j} \sum_{m=1}^M \mathbbm{1}_{\{f(t_m(x_n)) \neq f(x_n)\}} } \\
\gammah_{j,n} =& \frac{1}{M}\sum_{m=1}^M \mathbbm{1}_{ \{f(t_m(x_n)=f(x_n) \} }\\
\label{eq:ph} 
\ph_{j,n} =& (\alphah_{j} - \betah_{j}) \gammah_{j,n} + \betah_{j}.
\end{align}

\revised{
To choose the transformation for each dataset-model combination (please refer to Section~\ref{sec:experiments} for a full description of the datasets), we perform grid search over Gaussian and uniform noise parameters and choose the setting that results in the largest standard deviation of $\ph_{j,n}$. In the Gaussian case, we search over the space [-20, 20] for the mean and (0, 20] for the standard deviation.
For uniform noise, we explore the space [-20, 20], by varying both the minimum noise as well as the range of the noise.}

\revised{
Table \ref{tab:selected_transformation} shows the selected transformation parameters and corresponding values for $\alphah$, $\betah$, and standard deviation of $\ph_{j,n}$, denoted by $\hat{\sigma}$, as computed over the different datasets and models used in our experiments.
As shown in the table, $\hat{\sigma}$ varies between 0.0395 and 0.0967, which illustrates the challenge of finding an appropriate transformation. 
}
\begin{table}[!t]
    \centering
    \caption{Selected transformation parameters over the different datasets and models. The number of transformation is $M=1000$. We use $U(a,b)$ to denote uniform noise with a range of $[a, b]$ and $G(a,b)$ to denote Gaussian noise with a mean of $a$ and standard deviation of $b$.}
    \label{tab:selected_transformation}
    \vskip 0.15in
    \begin{small}
    \begin{sc}
    \resizebox{0.5\textwidth}{!}{
    \begin{tabular}{llcccc}
    \toprule
    Dataset & Model& Parameters & $\alphah$ & $\betah$ & $\hat{\sigma}$\\
    \midrule
    MNIST & LeNet 5 & U(-2, 4) & 0.9910 & 0.6358  & 0.0167\\
    \midrule
    \multirow{5}{*}{CIFAR10} & DenseNet 40 & $U(5, 14)$ & 0.9399 & 0.6046 & 0.0456 \\
     & LeNet 5 & $U(16, 19)$ & 0.7795 & 0.4826 & 0.0616\\
     & ResNet 110 & $U(-6, 3)$ & 0.9567 & 0.6320  & 0.0395 \\ 
     & ResNet 110 SD & $U(-16, -8)$ & 0.9286 & 0.5990 & 0.0500 \\
     & WRN 28-10 & $U(-16, -8)$ & 0.9715 & 0.6529 & 0.0332\\ 
    \midrule
    \multirow{5}{*}{CIFAR100} & DenseNet 40 & $G(-20, 2)$ & 0.7615 & 0.4082 & 0.0800\\
     & LeNet 5 & $G(-5, 1)$ & 0.4817 & 0.2542 & 0.0553\\
     & ResNet 110 & $G(-4, 2)$ & 0.7837 & 0.4457 & 0.0801\\ 
     & ResNet 110 SD & $G(-19, 2)$ & 0.7870 & 0.4558 & 0.0782\\ 
     & WRN 28-10 & $G(4, 2)$ & 0.8706 & 0.5038 & 0.0967\\ 
    \midrule
    \multirow{4}{*}{ImageNet} & DenseNet 161 & $G(16, 2)$ & 0.8464 & 0.5190 & 0.0806\\
     & MobileNet V2 & $G(3, 2)$ & 0.8222 & 0.5013 & 0.0871 \\
     & ResNet 152 & $G(0, 2)$ & 0.8551 & 0.5268 & 0.0803  \\ 
     & WRN 101-2 & $G(3, 2)$ & 0.8596 & 0.5236 & 0.0820\\ 
    \bottomrule
    \end{tabular}
    }
    \end{sc}
    \end{small}
\end{table}

\section{Implementation}
\label{sec:implementation}

Based on the theory described in Section \ref{sec:cal}, we propose \ours{}, an iterative algorithm for confidence calibration.
\ours{} operates differently during design time and runtime.
During design time, \ours{} samples random transformations and learns the $\alphah_j$ and $\betah_j$ parameters for each bin. These parameters are then used at runtime, on test data, to estimate the confidence for new examples.

\paragraph{Design Time Algorithm.}
The design time algorithm is described in Algorithm~\ref{alg:design_time}. %
The high-level idea of the algorithm is to achieve the sufficient condition outlined in Theorem~\ref{thm:sufficiency} on the validation set. In particular, we first sample transformations $\{t_1, \dots, t_M\}$ from $\Ts$ and observe the corresponding predictions $\bar{y}_{n,m}$. Based on $\bar{y}_{n,m}$, we compute the fraction of transformed examples that have the same label as the original image, $\gamma_n$. We emphasize that, as a special case of Theorem~\ref{thm:sufficiency}, $\gamma_n$ is computed separately for each example, as opposed to averaged over the entire partition. This modification ensures that the data is spread across multiple partitions, while still satisfying the sufficient condition for perfect calibration.

After the initialization step, \ours{} recursively estimates the parameters $\alpha_j$ and $\beta_j$ and computes the calibrated confidence using those two values based on Theorem \ref{thm:sufficiency}. Since the original partitioning may change after reestimating the parameters, we repeat this process until there is no change in the data partitioning (or a maximum number of iterations is reached). Note that for the corner case where the transformations result in empty sets (i.e., either all labels change or all remain the same), we set $\alphah_j^k$, $\betah_j^k$ to the bin accuracy.  Ultimately, at design time, \ours{} returns the set of calibration pairs for all partitions for all iterations learned in the design time algorithm as a set $\Hs$, the set of transformations $\Tsh$, and the validation data accuracy, $p$.

\begin{algorithm}[tb]
   \caption{Design Time Algorithm}
   \label{alg:design_time}
\begin{algorithmic}
   \STATE {\bfseries Input:} validation set $Z = (x_1, y_1), \dots, (x_N, y_N)$, transformation set $\Ts$, number of transformations $M$, classifier $f$, confidence space partition $\Ps_1, \dots, \Ps_J$, maximum number of iterations $K$
   \STATE
   \STATE Sample transformations $\Tsh = \{t_1, \dots, t_M\}$ from $\Ts$
   \STATE $\ph = \frac{1}{N} \sum_{n = 1}^N \mathbbm{1}_{\{y_n = f(x_n)\}}$
   \FOR{$n=1$ {\bfseries to} $N$} 
   \STATE $\gammah_n = \frac{1}{M} \sum_{m = 1}^M \mathbbm{1}_{\{f(t_m(x_n)) = f(x_n)\}}$
   \STATE $p_n = \ph$
   \ENDFOR
   \STATE 
   \FOR{$k=1$ {\bfseries to} $K$}
   \STATE Compute sets $\Is_1^k, \dots, \Is_J^k$ s. t. $n \in \Is_j^k \iff p_n \in \Ps_j$
   \IF{$k > 1 \wedge \Is_1^k = \Is_1^{k-1} \wedge \dots \wedge \Is_J^k = \Is_J^{k-1}$}
   \STATE \textbf{break} 
   \ENDIF
   \FOR{$j=1$ {\bfseries to} $J$}
   \STATE $c =\sum_{n \in \Is_j^k} \sum_{m=1}^M \mathbbm{1}_{\{f(x_n) = f(t_m(x_n))\}} $
   \IF {$c = 0 \vee c = M|\Is_j^k|$}
   \STATE $\alphah^k_j = \betah^k_j =  \frac{1}{|\Is_j^k|} \sum_{n \in \Is_j^k} \mathbbm{1}_{\{y_n = f(x_n)\}}$
   \ELSE
   \STATE Compute $\alphah_j^k$ according to Equation~\eqref{eq:alphah}, using $\Is_j^k$
   \STATE Compute $\betah_j^k$ according to Equation~\eqref{eq:betah}, using $\Is_j^k$
   \ENDIF
   \STATE $p_n = \left(\alphah^k_j - \betah^k_j \right) \gammah_n + \betah^k_j, \; \forall n \in \Is_j^k$
   \ENDFOR
   \STATE $K^* = k$
   \ENDFOR
   \STATE $\Hs = \{ (\alphah_j^k, \betah_j^k) \; |  \; 1 \leq j \leq J, \; 1 \leq k \leq K^*\}$
   \STATE return $\Hs$, $\ph$, $\Tsh$
\end{algorithmic}
\end{algorithm}

\paragraph{Runtime Algorithm.}
The runtime algorithm is described in Algorithm~\ref{alg:run_time}. %
Once the parameters for each step, $\alphah_j^k$ and $\betah_j^k$, are learned in Algorithm~\ref{alg:design_time}, \ours{} can calibrate the confidence for a new input $x$ using the calibration parameter pairs in $\Hs$.
\ours{} first observes the original prediction $f(x)$ and the transformed data prediction $f(t_m(x))$ for all transformations $\{t_1, t_2, \dots, t_M\}$.
\ours{} computes $\gamma$ based on these values and iteratively updates the confidence according to the calibration parameters learned in Algorithm~\ref{alg:design_time}. 
\begin{algorithm}[tb]
   \caption{Runtime Algorithm}
   \label{alg:run_time}
\begin{algorithmic}
   \STATE {\bfseries Input:} test sample $x \in \Xs$, original classifier $f$, outputs of Algorithm~\ref{alg:design_time}: $\Hs$, $\ph$, $\Tsh$. 
   \STATE $\gamma= \frac{1}{M} \sum_{m = 1}^M \mathbbm{1}_{\{f(x)=f(t_m(x))\}}, \; t_m \in \Tsh$
   \STATE $p = \ph$
   \FOR{$k=1$ {\bfseries to} $\frac{|\Hs|}{J}$}
   \STATE Identify partition index $j'$ for $x$ such that $p \in \Ps_{j'}$
   \STATE Identify calibration parameters $(\alphah^k_{j'}, \betah^k_{j'}) \in \Hs$
   \STATE $p=\left(\alpha^k_{j'} - \beta^k_{j'} \right) \gamma + \beta^k_{j'}$
   \ENDFOR
   \STATE return $p$
\end{algorithmic}
\end{algorithm}

\paragraph{Limitations.}
\label{sec:limitations}
There are two main limitations of our approach. First of all, since \ours{} uses transformations for calibration, choosing the appropriate transformations has a significant effect on performance.
To address this issue, we propose a transformation selection based on the maximization of confidence variance as described in Section \ref{sec:trans_selection}. The second limitation is the utility of the ECE metric itself -- it is possible to minimize the ECE by outputting the network accuracy as confidence for all examples, which defeats the purpose of calibration. We argue that by maximizing the variance of our algorithm, we ensure that the data is spread across multiple bins, as demonstrated in Section~\ref{sec:experiments}. Thus, one can use the ECE in multiple ways, e.g., in autonomous systems by making an informed decision through taking into account the calibrated confidence in a new example's predicted label.

\section{Experiments}
\label{sec:experiments}
We compare \ours{} with state-of-the-art calibration algorithms using several standard datasets and models.
For each model and dataset, we compute the ECE for the uncalibrated model and the calibrated confidence by the  algorithms.
The experimental setup, baseline algorithms and the evaluation metrics are explained in the following subsections.

\subsection{Experimental Setup}
This subsection provides the details about the datasets, models, baseline algorithms, and the evaluation metrics.

\textbf{Datasets and Models.}
\revised{We perform experiments on MNIST \cite{lecun1998gradient}, CIFAR 10/100 \cite{krizhevsky2009learning}, and ImageNet \cite{deng2009imagenet}}.
We use the following models for each dataset. 
For MNIST, we use one model, LeNet5 \cite{lecun1998gradient}.
For CIFAR 10/100, we use five different models, DenseNet 40 \cite{huang2017densely}, LeNet5 \cite{lecun1998gradient}, ResNet110 \cite{he2016deep}, ResNet110 SD \cite{huang2016deep}, and WRN-28-10 \cite{zagoruyko2016wide}.
For ImageNet, we use four models,  DenseNet161 \citep{huang2017densely}, MobileNetV2 \cite{sandler2018mobilenetv2}, ResNet152 \citep{he2016deep}, and WRN-101-2 \cite{zagoruyko2016wide}.

We implement LeNet5, ResNet110 SD, and obtain code for DenseNet40\footnote{https://github.com/andreasveit/densenet-pytorch under BSD 3-Clause License}, ResNet110\footnote{\label{fn:resnet110github}https://github.com/bearpaw/pytorch-classification, under MIT License} and WRN 28-10\footnotemark[\getrefnumber{fn:resnet110github}] from the corresponding github repositories.
We also obtained the pre-trained model for all models on ImageNet from PyTorch.\footnote{https://pytorch.org/docs/stable/torchvision/models.html, under BSD License}

\textbf{Baselines.}
We compare \ours{} with several state-of-the-art calibration algorithms, namely temperature scaling, vector scaling \cite{guo2017calibration}, Dir-ODIR, MS-ODIR \cite{kull2019beyond}, \revised{ETS,  IRM, IROvA-TS \cite{zhang2020mix}}, and ReCal \cite{jang2020improving}.
They calibrate confidence by learning a mapping function for uncalibrated logits or confidences. 
We obtain other calibration algorithms from their papers except for temperature scaling and vector scaling which we obtain from \citet{kull2019beyond}.
For ReCal, the authors provide three different setups for their algorithm, and we choose ('zoom-out', 0.1, 0.9, 20) because it shows the best results on ImageNet.

\textbf{Evaluation Metric.}
As described in Problem Statement~\ref{ps:main_problem}, we evaluate all algorithms based on ECE (with $J=15$ bins of equal width~\cite{guo2017calibration, nixon2019measuring}), as defined in Definition \ref{def:ece}.
Additionally, we calculate the learning time during design time to investigate each algoritm's practical utility.
If a calibration algorithm is too slow on real datasets, it may not be appropriate to use the algorithm in practice. 

\subsection{Results}
The experimental results have two parts.
The first part is a comparison on calibration performance in terms of ECE, and the second part is an analysis on time efficiency during design time.

\paragraph{ECE Results.}

\begin{table*}[bt]
\caption{ECE for the different calibration algorithms on different datasets and models. The number with the bold face and the underline denote the best and the second best result, respectively.}
\label{tab:exp_res_ECE}
\begin{center}
\resizebox{\textwidth}{!}{
\begin{sc}
\begin{tabular}{llcccccccccccc}
\toprule
Dataset & Model & \thead{Val Acc.\\(\%)} & \thead{Test Acc.\\(\%)} & Uncal. & TempS & VecS & \thead{MS-\\ODIR} & \thead{Dir-\\ODIR} & ETS & IRM & \thead{IROvA-\\TS} & ReCal & \ours\\
\midrule
MNIST & LeNet5 & 98.85 & 98.81 & 0.0076 & 0.0018 & \underline{0.0015} & 0.0024 & 0.0022 & 0.0019 & 0.0019 & 0.0033 & 0.0021 & \textbf{0.0008}\\
\midrule
\multirow{5}{*}{CIFAR10} & DenseNet 40 & 91.92 & 91.75 & 0.0520 & 0.0070 & \underline{0.0044} & 0.0052 & \textbf{0.0039} & 0.0069 & 0.0095 & 0.0107 & 0.0101 & 0.0057 \\
 & LeNet5 & 72.00 & 72.77 & 0.0182 & 0.0120 & \textbf{0.0092} & 0.0141 & \underline{0.0105} & 0.0115 & 0.0167 & 0.0229 & 0.0118 & 0.0110 \\
 & ResNet110 & 94.12 & 93.10 & 0.0456 & 0.0088 & 0.0094 & 0.0088 & 0.0084 & \textbf{0.0066} & 0.0103 & 0.0133 & 0.0090 & \underline{0.0071}\\
 & ResNet110 SD & 90.28 & 90.38 & 0.0538 & 0.0114 & \underline{0.0086} & 0.0102 & 0.0094 & 0.0112 & 0.0113 & 0.0156 & 0.0120 & \textbf{0.0044}\\
 & WRN 28-10 & 96.06 & 95.94 & 0.0251 & 0.0097 & 0.0096 & 0.0092 & 0.0094 & 0.0157 & \underline{0.0049} & 0.0088 & 0.0091 & \textbf{0.0026} \\
\midrule
\multirow{5}{*}{CIFAR100} & DenseNet 40 & 68.82 & 68.16 & 0.1728 & 0.0154 & 0.0266 & 0.0296 & 0.0189 & 0.0136 & \underline{0.0135} & 0.0377 & 0.0154 & \textbf{0.0073}\\
 & LeNet5 & 37.82 & 37.66 & \textbf{0.0100} & 0.0211 & 0.0155 & 0.0131 & 0.0142 & \underline{0.0120} & 0.0125 & 0.0363 & 0.0192 & 0.0123\\
 & ResNet 110 & 70.60 & 69.52 & 0.1422 &\textbf{0.0091} & 0.0300 & 0.0345 & 0.0231 & 0.0155 & 0.0202 & 0.0457 & \underline{0.0121} & 0.0127\\
 & ResNet 110 SD & 70.62 & 70.10 & 0.1229 & \underline{0.0089} & 0.0358 & 0.0355 & 0.0207 & \textbf{0.0086} & 0.0142 & 0.0425 & 0.0100 & 0.0098\\
 & WRN 28-10 & 79.62 & 79.90 & 0.0534 & 0.0437 & 0.0452 & 0.0355 & 0.0346 & 0.0370 & \textbf{0.0108} & 0.0336 & 0.0373 & \underline{0.0112}\\
\midrule
\multirow{2}{*}{ImageNet} & DenseNet 161 & 76.83 & 77.45 & 0.0564 & 0.0199 & 0.0233 & 0.0368 & 0.0477 & 0.0100 & \underline{0.0090} & 0.0487 & 0.0133 & \textbf{0.0043}\\
 & MobileNet V2 & 71.69 & 72.01 & 0.0274 & 0.0164 & 0.0153 & 0.0212 & 0.0269 & 0.0087 & \underline{0.0075} & 0.0477 & 0.0153 & \textbf{0.0011}\\
 & ResNet 152 & 77.93 & 78.69 & 0.0491 & 0.0201 & 0.0207 & 0.0347 & 0.0397 & 0.0112 & \underline{0.0080} & 0.0457 & 0.0139 & \textbf{0.0052}\\
 & WRN 101-2 & 78.67 & 79.15 & 0.0524 & 0.0307 & 0.0330 & 0.0418 & 0.0279 & 0.0165 & \underline{0.0086} & 0.0426 & 0.0258 & \textbf{0.0067}\\
 \bottomrule
\end{tabular}
\end{sc}
}
\end{center}
\end{table*}

Table~\ref{tab:exp_res_ECE} displays ECE values for each algorithm, along with each model's validation set and test set accuracy (the Supplementary Material provides an extensive evaluation where we also vary the number of bins in the ECE evaluation, in order to test each algorithm's robustness to more fine-grained bins).
As discussed in Section~\ref{sec:trans_selection}, \ours{} uses the transformations shown in Table~\ref{tab:selected_transformation}. As shown in Table~\ref{tab:exp_res_ECE}, \ours{} achieves the lowest ECE on \revised{8}
out of 15 benchmarks. The benefit of using transformations is especially pronounced in the large-dimensional ImageNet dataset where \ours{} consistently achieves the lowest ECE on all models.

Interestingly, the ECE produced by \ours{} closely tracks the difference between the validation and test set accuracy. In some sense, one cannot hope to do better than this difference as it reflects the variance within each dataset. Thus, the benchmarks where \ours{} does not achieve the best performance are settings with large differences in generalization accuracy. For example, in the case of ResNet110 on CIFAR10 and CIFAR100, the gaps are 1.02 and 1.08 percentage points, respectively.

Another reason for our strong performance on ImageNet is the dataset size (there are 25,000 images in the ImageNet validation set compared to 10,000 images in MNIST and 5,000 images in CIFAR10/100).
A larger validation set means that each partition is likely to have more samples, which in turn results in more accurate estimation of $\alpha$, $\beta$, and $\gamma$. %

\begin{figure}
    \centering
    \includegraphics[width=\textwidth]{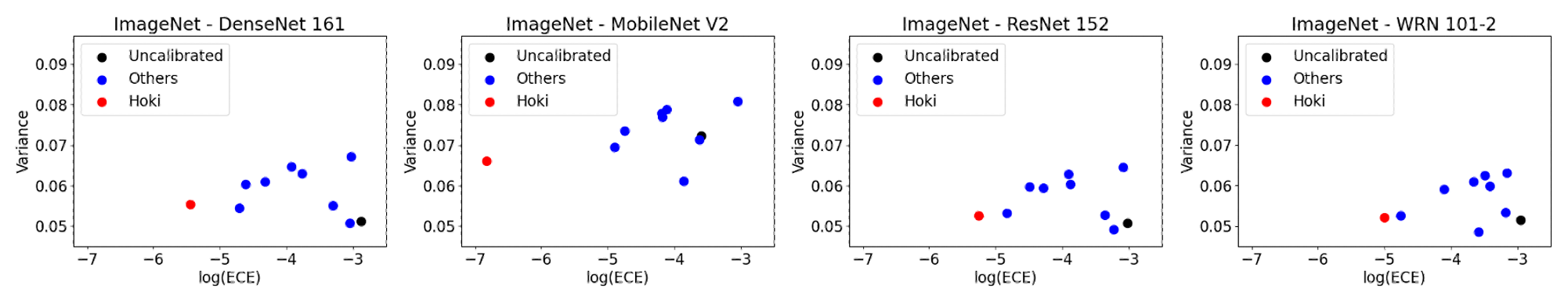}
    \caption{$\log$ ECE vs. Variance on the various ImageNet models.}
    \label{fig:log_ece_var}
\end{figure}

\paragraph{ECE Variance.} For further evaluation, we also explore the variance in confidences produced by each algorithm. As noted in Section~\ref{sec:trans_selection}, a larger variance is preferred because it provides some indication that examples with low true confidence are indeed separated from those with high confidence. Figure~\ref{fig:log_ece_var} provides a plot of the ECE variance vs. the ECE \revised{(in log scale)} for all algorithms on the ImageNet models (plots for the other benchmarks are provided in the Supplementary Material). As shown in the figure, \ours{} has comparable variance to other algorithms that also achieve low ECE on ImageNet. Overall, there appears to be a trade-off between achieving \revised{low} 
ECE and high variance -- we leave exploring this phenomenon for future work.

\paragraph{Time Efficiency.}
Table \ref{tab:learng_time} displays the learning time of each calibration algorithm during design time.
The main reason for the proposed method's good scalability is that we apply transformations to the logits -- thus, we avoid the need to perform the input transformations that are needed in ReCal, for example.
Furthermore, \ours{} is comparable to temperature scaling, which is a fairly simple approach in the sense that it only needs to learn one parameter.
In summary, \ours{} not only achieves low ECE on most benchmarks but is also fast to execute.

\begin{table}[!hb]
    \centering
    \caption{Learning time (sec). The number with the bold face and the underline are the best and the second best result, respectively.}
    \label{tab:learng_time}
    \vskip 0.15in
    \resizebox{1.0\textwidth}{!}{
    \begin{sc}
    \begin{tabular}{llccccccccc}
    \toprule
    Dataset & Model & TempS & VecS & MS-ODIR & Dir-ODIR & ETS & IRM & IROvA-TS & ReCal & \ours\\
    \midrule
    MNIST & LeNet5 & 0.16 & 43.31 & 112.04 & 207.88  & 0.31 & \textbf{0.02} & \underline{0.12} & 57.67 & 0.53\\
    \midrule
\multirow{5}{*}{CIFAR10}& DenseNet40 & 0.08 & 31.33 & 222.93 & 92.34 & \underline{0.04} & \textbf{0.01} & 0.06 & 84.04 & 0.34\\
 & LeNet5 & 0.05 & 11.86 & 79.62 & 74.33 & \underline{0.04} & \textbf{0.01} & 0.06 & 110.79 & 0.32\\
 & ResNet110 & 0.07 & 27.17 & 193.04 & 87.73 & \underline{0.05} & \textbf{0.01} & 0.07 & 38.85 & 0.28\\
 & ResNet110 SD & 0.07 & 21.39 & 189.27 & 93.17 & \underline{0.04} & \textbf{0.01} & 0.09 & 58.74 & 0.28\\
 & WRN 28-10 & 0.05 & 22.71 & 123.46 & 92.80 & \underline{0.04} & \textbf{0.01} & 0.08  & 49.62 & 0.39\\
\midrule
\multirow{5}{*}{CIFAR100} & DenseNet40 & \underline{0.51} & 23.17 & 1211.68 & 626.00 & 0.54 & \textbf{0.11} & 0.57 & 136.23 & 1.09\\
 & LeNet5 & 0.42 & 24.17 & 459.59 & 236.87 & \underline{0.29} & \textbf{0.12} & 0.45 & 97.77 & 0.99\\
 & ResNet110 & \underline{0.30} & 25.49 & 1459.71 & 510.12& 0.59 & \textbf{0.12} & 0.57  & 97.29 & 1.02\\
 & ResNet110 SD & \underline{0.24} & 25.51 & 1696.10 & 495.23 & 0.61 & \textbf{0.12} & 0.63  & 604.12 & 1.01\\
 & WRN 28-10 & \underline{0.30} & 26.71 & 1110.11 & 611.52 & 0.64 & \textbf{0.11} & 0.50  & 125.84 & 1.03\\
\midrule
\multirow{4}{*}{ImageNet} & DenseNet161 & \underline{18.79} & 179.61 & 13901.38 & 6891.19 & 29.34 & \textbf{8.76} & 31.16  & 50730.17 & 37.68\\
 & MobileNet V2 & \underline{18.74} & 423.48 & 3899.07 & 12695.64& 27.56 & \textbf{8.80} & 26.99  & 3139.60 & 32.97\\
 & ResNet152 & \underline{18.79} & 169.72 & 12401.58 & 5402.85  & 29.28 & \textbf{9.01} & 31.14 & 71254.34 & 33.78\\
 & WRN 101-2 & \underline{18.73} & 182.39 & 16989.40 & 11378.40 & 29.17 & \textbf{8.67} & 31.46 & 31545.77 & 34.26\\
\bottomrule
    \end{tabular}
    \end{sc}
    }
\end{table}

\section{Conclusion}
\label{sec:conclusion}

\revised{This work proposed a confidence calibration algorithm based on the intuition that we can partition examples based on the neural network's sensitivity to transformations.}
Based on this intuition, we provided a sufficient condition for perfect calibration in terms of ECE.
We performed an extensive experimental comparison and demonstrated that \ours{} outperforms state-of-the-art approaches in multiple datasets and models, and the benefits are especially pronounced on the challenging ImageNet. For future work, we plan to explore the benefits of combining different transformations, particularly a mix of input and logit transformations.
If those transformations are chosen carefully in order to identify input sensitivity, we expect that more accurate calibration is possible.
\bibliographystyle{unsrtnat}
\bibliography{reference}

\begin{thebibliography}{31}
\providecommand{\natexlab}[1]{#1}
\providecommand{\url}[1]{\texttt{#1}}
\expandafter\ifx\csname urlstyle\endcsname\relax
  \providecommand{\doi}[1]{doi: #1}\else
  \providecommand{\doi}{doi: \begingroup \urlstyle{rm}\Url}\fi

\bibitem[He et~al.(2016)He, Zhang, Ren, and Sun]{he2016deep}
Kaiming He, Xiangyu Zhang, Shaoqing Ren, and Jian Sun.
\newblock Deep residual learning for image recognition.
\newblock In \emph{Proceedings of the IEEE conference on computer vision and
  pattern recognition}, pages 770--778, 2016.

\bibitem[Zagoruyko and Komodakis(2016)]{zagoruyko2016wide}
Sergey Zagoruyko and Nikos Komodakis.
\newblock Wide residual networks.
\newblock \emph{arXiv preprint arXiv:1605.07146}, 2016.

\bibitem[Tan and Le(2019)]{tan2019efficientnet}
Mingxing Tan and Quoc Le.
\newblock Efficientnet: Rethinking model scaling for convolutional neural
  networks.
\newblock In \emph{International Conference on Machine Learning}, pages
  6105--6114. PMLR, 2019.

\bibitem[Redmon and Farhadi(2018)]{redmon2018yolov3}
Joseph Redmon and Ali Farhadi.
\newblock Yolov3: An incremental improvement.
\newblock \emph{arXiv preprint arXiv:1804.02767}, 2018.

\bibitem[Bochkovskiy et~al.(2020)Bochkovskiy, Wang, and
  Liao]{bochkovskiy2020yolov4}
Alexey Bochkovskiy, Chien-Yao Wang, and Hong-Yuan~Mark Liao.
\newblock Yolov4: Optimal speed and accuracy of object detection.
\newblock \emph{arXiv preprint arXiv:2004.10934}, 2020.

\bibitem[Li et~al.(2017)Li, Ma, Jiang, Li, Zhang, Liu, Cao, Kannan, and
  Zhu]{li2017deep}
Chao Li, Xiaokong Ma, Bing Jiang, Xiangang Li, Xuewei Zhang, Xiao Liu, Ying
  Cao, Ajay Kannan, and Zhenyao Zhu.
\newblock Deep speaker: an end-to-end neural speaker embedding system.
\newblock \emph{arXiv preprint arXiv:1705.02304}, 650, 2017.

\bibitem[Ronneberger et~al.(2015)Ronneberger, Fischer, and
  Brox]{ronneberger2015u}
Olaf Ronneberger, Philipp Fischer, and Thomas Brox.
\newblock U-net: Convolutional networks for biomedical image segmentation.
\newblock In \emph{International Conference on Medical image computing and
  computer-assisted intervention}, pages 234--241. Springer, 2015.

\bibitem[Arcadu et~al.(2019)Arcadu, Benmansour, Maunz, Willis, Haskova, and
  Prunotto]{arcadu2019deep}
Filippo Arcadu, Fethallah Benmansour, Andreas Maunz, Jeff Willis, Zdenka
  Haskova, and Marco Prunotto.
\newblock Deep learning algorithm predicts diabetic retinopathy progression in
  individual patients.
\newblock \emph{NPJ digital medicine}, 2\penalty0 (1):\penalty0 1--9, 2019.

\bibitem[Guo et~al.(2017)Guo, Pleiss, Sun, and Weinberger]{guo2017calibration}
Chuan Guo, Geoff Pleiss, Yu~Sun, and Kilian~Q Weinberger.
\newblock On calibration of modern neural networks.
\newblock \emph{arXiv preprint arXiv:1706.04599}, 2017.

\bibitem[Gupta et~al.(2020)Gupta, Rahimi, Ajanthan, Mensink, Sminchisescu, and
  Hartley]{gupta2020calibration}
Kartik Gupta, Amir Rahimi, Thalaiyasingam Ajanthan, Thomas Mensink, Cristian
  Sminchisescu, and Richard Hartley.
\newblock Calibration of neural networks using splines.
\newblock \emph{arXiv preprint arXiv:2006.12800}, 2020.

\bibitem[Jang et~al.(2020)Jang, Lee, and Weimer]{jang2020improving}
Sooyong Jang, Insup Lee, and James Weimer.
\newblock Improving classifier confidence using lossy label-invariant
  transformations.
\newblock \emph{arXiv preprint arXiv:2011.04182}, 2020.

\bibitem[Patel et~al.(2020)Patel, Beluch, Yang, Pfeiffer, and
  Zhang]{patel2020multi}
Kanil Patel, William Beluch, Bin Yang, Michael Pfeiffer, and Dan Zhang.
\newblock Multi-class uncertainty calibration via mutual information
  maximization-based binning.
\newblock \emph{arXiv preprint arXiv:2006.13092}, 2020.

\bibitem[Platt et~al.(1999)]{platt1999probabilistic}
John Platt et~al.
\newblock Probabilistic outputs for support vector machines and comparisons to
  regularized likelihood methods.
\newblock \emph{Advances in large margin classifiers}, 10\penalty0
  (3):\penalty0 61--74, 1999.

\bibitem[Kull et~al.(2019)Kull, Nieto, K{\"a}ngsepp, Silva~Filho, Song, and
  Flach]{kull2019beyond}
Meelis Kull, Miquel~Perello Nieto, Markus K{\"a}ngsepp, Telmo Silva~Filho, Hao
  Song, and Peter Flach.
\newblock Beyond temperature scaling: Obtaining well-calibrated multi-class
  probabilities with dirichlet calibration.
\newblock In \emph{Advances in Neural Information Processing Systems}, pages
  12295--12305, 2019.

\bibitem[Rahimi et~al.(2020)Rahimi, Shaban, Cheng, Hartley, and
  Boots]{rahimi2020intra}
Amir Rahimi, Amirreza Shaban, Ching-An Cheng, Richard Hartley, and Byron Boots.
\newblock Intra order-preserving functions for calibration of multi-class
  neural networks.
\newblock \emph{Advances in Neural Information Processing Systems}, 33, 2020.

\bibitem[Wenger et~al.(2020)Wenger, Kjellstr{\"o}m, and Triebel]{wenger2020non}
Jonathan Wenger, Hedvig Kjellstr{\"o}m, and Rudolph Triebel.
\newblock Non-parametric calibration for classification.
\newblock In \emph{International Conference on Artificial Intelligence and
  Statistics}, pages 178--190. PMLR, 2020.

\bibitem[Tran et~al.(2019)Tran, Bonilla, Cunningham, Michiardi, and
  Filippone]{tran2019calibrating}
Gia-Lac Tran, Edwin~V Bonilla, John Cunningham, Pietro Michiardi, and Maurizio
  Filippone.
\newblock Calibrating deep convolutional gaussian processes.
\newblock In \emph{The 22nd International Conference on Artificial Intelligence
  and Statistics}, pages 1554--1563, 2019.

\bibitem[Kumar et~al.(2018)Kumar, Sarawagi, and Jain]{kumar2018trainable}
Aviral Kumar, Sunita Sarawagi, and Ujjwal Jain.
\newblock Trainable calibration measures for neural networks from kernel mean
  embeddings.
\newblock In \emph{International Conference on Machine Learning}, pages
  2805--2814, 2018.

\bibitem[Seo et~al.(2019)Seo, Seo, and Han]{seo2019learning}
Seonguk Seo, Paul~Hongsuck Seo, and Bohyung Han.
\newblock Learning for single-shot confidence calibration in deep neural
  networks through stochastic inferences.
\newblock In \emph{Proceedings of the IEEE Conference on Computer Vision and
  Pattern Recognition}, pages 9030--9038, 2019.

\bibitem[Naeini et~al.(2015)Naeini, Cooper, and
  Hauskrecht]{naeini2015obtaining}
Mahdi~Pakdaman Naeini, Gregory~F Cooper, and Milos Hauskrecht.
\newblock Obtaining well calibrated probabilities using bayesian binning.
\newblock In \emph{Proceedings of the... AAAI Conference on Artificial
  Intelligence. AAAI Conference on Artificial Intelligence}, volume 2015, page
  2901. NIH Public Access, 2015.

\bibitem[Zhang et~al.(2020)Zhang, Kailkhura, and Han]{zhang2020mix}
Jize Zhang, Bhavya Kailkhura, and T~Yong-Jin Han.
\newblock Mix-n-match: Ensemble and compositional methods for uncertainty
  calibration in deep learning.
\newblock In \emph{International Conference on Machine Learning}, pages
  11117--11128. PMLR, 2020.

\bibitem[Bahat and Shakhnarovich(2020)]{bahat2020classification}
Yuval Bahat and Gregory Shakhnarovich.
\newblock Classification confidence estimation with test-time
  data-augmentation.
\newblock \emph{arXiv preprint arXiv:2006.16705}, 2020.

\bibitem[Kumar et~al.(2019)Kumar, Liang, and Ma]{kumar2019verified}
Ananya Kumar, Percy Liang, and Tengyu Ma.
\newblock Verified uncertainty calibration.
\newblock \emph{arXiv preprint arXiv:1909.10155}, 2019.

\bibitem[Park et~al.(2020)Park, Li, Bastani, and Lee]{park2020pac}
Sangdon Park, Shuo Li, Osbert Bastani, and Insup Lee.
\newblock Pac confidence predictions for deep neural network classifiers.
\newblock \emph{arXiv preprint arXiv:2011.00716}, 2020.

\bibitem[LeCun et~al.(1998)LeCun, Bottou, Bengio, and
  Haffner]{lecun1998gradient}
Yann LeCun, L{\'e}on Bottou, Yoshua Bengio, and Patrick Haffner.
\newblock Gradient-based learning applied to document recognition.
\newblock \emph{Proceedings of the IEEE}, 86\penalty0 (11):\penalty0
  2278--2324, 1998.

\bibitem[Krizhevsky et~al.(2009)Krizhevsky, Hinton,
  et~al.]{krizhevsky2009learning}
Alex Krizhevsky, Geoffrey Hinton, et~al.
\newblock Learning multiple layers of features from tiny images.
\newblock 2009.

\bibitem[Deng et~al.(2009)Deng, Dong, Socher, Li, Li, and
  Fei-Fei]{deng2009imagenet}
Jia Deng, Wei Dong, Richard Socher, Li-Jia Li, Kai Li, and Li~Fei-Fei.
\newblock Imagenet: A large-scale hierarchical image database.
\newblock In \emph{2009 IEEE conference on computer vision and pattern
  recognition}, pages 248--255. Ieee, 2009.

\bibitem[Huang et~al.(2017)Huang, Liu, Van Der~Maaten, and
  Weinberger]{huang2017densely}
Gao Huang, Zhuang Liu, Laurens Van Der~Maaten, and Kilian~Q Weinberger.
\newblock Densely connected convolutional networks.
\newblock In \emph{Proceedings of the IEEE conference on computer vision and
  pattern recognition}, pages 4700--4708, 2017.

\bibitem[Huang et~al.(2016)Huang, Sun, Liu, Sedra, and
  Weinberger]{huang2016deep}
Gao Huang, Yu~Sun, Zhuang Liu, Daniel Sedra, and Kilian~Q Weinberger.
\newblock Deep networks with stochastic depth.
\newblock In \emph{European conference on computer vision}, pages 646--661.
  Springer, 2016.

\bibitem[Sandler et~al.(2018)Sandler, Howard, Zhu, Zhmoginov, and
  Chen]{sandler2018mobilenetv2}
Mark Sandler, Andrew Howard, Menglong Zhu, Andrey Zhmoginov, and Liang-Chieh
  Chen.
\newblock Mobilenetv2: Inverted residuals and linear bottlenecks.
\newblock In \emph{Proceedings of the IEEE conference on computer vision and
  pattern recognition}, pages 4510--4520, 2018.

\bibitem[Nixon et~al.(2019)Nixon, Dusenberry, Zhang, Jerfel, and
  Tran]{nixon2019measuring}
Jeremy Nixon, Michael~W Dusenberry, Linchuan Zhang, Ghassen Jerfel, and Dustin
  Tran.
\newblock Measuring calibration in deep learning.
\newblock In \emph{CVPR Workshops}, volume~2, 2019.

\end{thebibliography}

\setcounter{section}{0}
\setcounter{theorem}{0}
\renewcommand\thesection{\Alph{section}}
\appendix
\onecolumn
\title{Confidence Calibration with Bounded Error Using Transformation:\\ Supplementary Material}

\section{Proof of Theorem 1}
We begin by observing for any $g$ and $\Ps_j \in \{ \Ps_1, \dots, \Ps_j\}$, the law of total probability states
\begin{align*}
 P\left[Y = f(X) \mid g(X)\in \Ps_j \right] = &\\
= & \;P\left[Y = f(X) \mid f(T(X)) = f(X), g(X) \in \Ps_j \right]
  P\left[f(T(X)) = f(X) \mid g(X) \in \Ps_j   \right] \\
&+ P\left[Y = f(X) \mid f(T(X)) \neq f(X), g(X) \in \Ps_j \right]
P\left[f(T(X)) \neq f(X) \mid g(X) \in \Ps_j   \right] \\
  = & \;\alpha_{j} \gamma_j 
  +  \beta_j  P\left[f(T(X)) \neq f(X) \mid g(X) \in \Ps_j   \right] \\
  = & \;\alpha_j \gamma_j +  \beta_j  (1 - \gamma_j)  \\
\end{align*}
Then, from Definition~\ref{def:ece}, %
\begin{align*}
CE(g) = \sum_{j = 1}^{J} w_j \left|e_j \right| 
= & \sum_{j = 1}^{J} w_j \Big| P\left[Y = f(X) \mid g(X)\in \Ps_j \right] - E\left[ g(X) \mid g(X) \in \Ps_j   \right] \Big|\\
= & \sum_{j = 1}^{J} w_j \Big| \alpha_j \gamma_j + \beta_j (1 - \gamma_j) - E\left[ g(X) \mid g(X) \in \Ps_j   \right] \Big| \\
\end{align*}
Thus, the following is a sufficient property for $CE(g) = 0$:
\begin{align*}
\forall j \in \{1, \dots, J\}, \;  E\left[ g(X) \mid g(X) \in \Ps_j   \right]  =  \alpha_j \gamma_j + \beta_j (1 - \gamma_j)
\end{align*}

\section{Generalization bounds on the ECE}
\label{sec:app_thm2}
This section presents a bound on the generalization ECE, given a new dataset, in a probably approximately correct (PAC) sense. Theorem 2 states that if a calibrator $g$ achieves a low ECE on a test set, $Z$, then the expected calibration error of $g$ can be bounded, in a PAC sense.

\begin{customthm}{2}[Bounded Calibration Error]
\label{thm:generalization}
Suppose a calibrator $g$ was evaluated on a test set $Z = \{(x_1, y_1), \dots, (x_N, y_N)\}$, achieving $ECE_Z(g)$.
For any $\delta$, the CE is bounded,~i.e., 
\begin{equation*}
P\left[CE(g) \leq \epsilon \right] \geq 1 - \delta,
\end{equation*}
when
\begin{align*}
	& \epsilon = ECE_Z(g) + \frac{J\sqrt{2}}{\sqrt{N}}  \sqrt{ 2 \ln(2) - \ln(\delta)}
\end{align*}
\end{customthm}

\begin{proof}

\begin{align*}
P\left[CE(g) \geq \epsilon \right] = &\\
= & \; P\left[ \sum_{j =1}^J \left| e_j \right| w_j \geq \epsilon \right] \\
= & \; P\left[ \sum_{j =1}^J \left| e_j \right| (w_j - \wh_j + \wh_j)  \geq \epsilon \right] \\
\leq & \; P\left[ \sum_{j =1}^J \left| e_j \right| \left|w_j - \wh_j \right| + \left| e_j \right| \wh_j  \geq \epsilon \right] \\
\leq & \; P\left[ \sum_{j =1}^J \left|w_j - \wh_j \right| + \left| e_j \right| \wh_j  \geq \epsilon \right] \\
\leq & \; P\left[ \sum_{j =1}^J \left|w_j - \wh_j \right| + \left| e_j - \eh_j \right| \wh_j  + \left| \eh_j \right| \wh_j \geq \epsilon \right] \\
= & \; P\left[ \sum_{j =1}^J \left|w_j - \wh_j \right| + \left| e_j - \eh_j \right|\wh_j \geq \epsilon - ECE_Z(g) \right] \\
\leq & \; \max_j P\left[ \left|w_j - \wh_j \right| + \left| e_j - \eh_j \right| \wh_j \geq \frac{\epsilon - ECE_Z(g)}{J} \right] \\
\leq & \; \max_j  P\left[ \left|w_j - \wh_j \right| \geq \frac{\epsilon - ECE_Z(g)}{2J} \right] + P\left[ \left| e_j - \eh_j \right| \geq \frac{\epsilon - ECE_Z(g)}{2 J  \wh_j} \right] \\
\leq & \; \max_j 2 \exp\left\{ - 2 N  \left(\frac{\epsilon - ECE_Z(g)}{2J} \right)^2 \right\} + 2 \exp\left\{ - 2 N \wh_j \left( \frac{\epsilon - ECE_Z(g)}{2 J \wh_j} \right)^2 \right\} \\
= & \; \max_j 2 \exp\left\{ - 2 N  \left(\frac{\epsilon - ECE_Z(g)}{2 J}\right)^2 \right\} + 2 \exp\left\{ - 2 N \frac{1}{\wh_j}  \left(\frac{\epsilon - ECE_Z(g)}{2 J}  \right)^2 \right\} \\
\leq & \; 2 \exp\left\{ - 2 N \left(\frac{\epsilon - ECE_Z(g)}{2J}\right)^2 \right\} + 2 \exp\left\{ - 2 N  \left(\frac{\epsilon - ECE_Z(g)}{2J}  \right)^2 \right\} \\
= & \; 4 \exp\left\{ - \frac{ N  (\epsilon - ECE_Z(g))^2}{2 J^2} \right\} \\
\end{align*}
We complete the proof by observing
\begin{align*}
& \;  4 \exp\left\{ - \frac{N (\epsilon - ECE_Z(g))^2}{2 J^2} \right\}\leq \delta 
\iff \; \epsilon \geq ECE_Z(g) + \frac{J\sqrt{2}}{\sqrt{N}}  \sqrt{\left( 2 \ln(2) - \ln(\delta) \right) } 
\end{align*}
\end{proof}

\section{Additional Experiments}
In this section, we present additional experimental results.
We show more plots for ECE variance, comparisons using ECE with different number of bins, and Hoki's ECE with the different initialization.

\subsection{ECE Variance}
In addition to the ImageNet result in the main paper, we show the same ECE variance results for other benchmarks on MNIST (Figure~\ref{fig:ece_var_mnist}) and CIFAR10/100 (Figure~\ref{fig:ece_var_cifar10} and \ref{fig:ece_var_cifar100}).
The range of variance are different depending on dataset, but the widths of the range are equal.

Similar to ImageNet case in the main text, Hoki has comparable variance for MNIST (Figure~\ref{fig:ece_var_mnist}) and CIFAR10 (Figure~\ref{fig:ece_var_cifar10}), but with the smaller ECE.
As shown in Figure~\ref{fig:ece_var_cifar100}, Hoki has a similar pattern with ImageNet case on CIFAR100, \ie{}it has comparable variance with better ECE.
Note that uncalibrated classifier is not shown for DenseNet40~(Figure~\ref{fig:ece_var_cifar100_densenet40}), ResNet110~(Figure~\ref{fig:ece_var_cifar100_resnet110}), and ResNet110 SD~(Figure~\ref{fig:ece_var_cifar100_resnet110sd}), because the uncalibrated ECEs are high compared to other algorithms as shown in Table~\ref{tab:exp_res_ECE} and the variances are low.

\begin{figure}[!hbt]
    \centering
\begin{subfigure}{.49\textwidth}
    \includegraphics[width=0.9\textwidth]{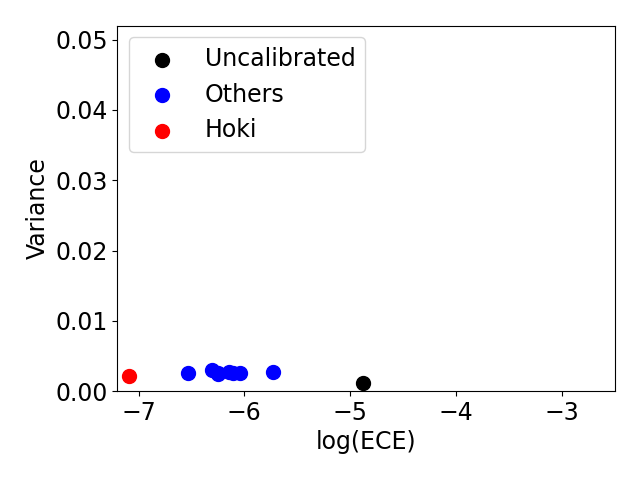}
    \caption{LeNet5}
    \label{fig:ece_var_mnist_lenet5}
\end{subfigure}
\caption{log(ECE) vs. Variance on MNIST.}
\label{fig:ece_var_mnist}
\end{figure}

\begin{figure}[!hbt]
    \centering
\begin{subfigure}{.49\textwidth}
    \includegraphics[width=0.9\textwidth]{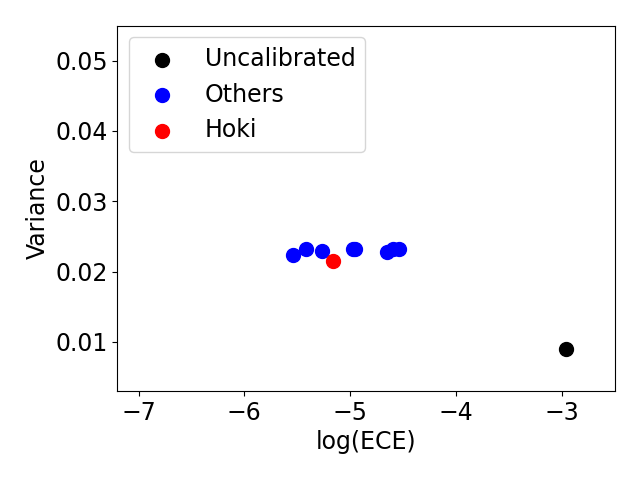}
    \caption{DenseNet40}
    \label{fig:ece_var_cifar10_densenet40}
\end{subfigure}
\begin{subfigure}{.49\textwidth}
    \includegraphics[width=0.9\textwidth]{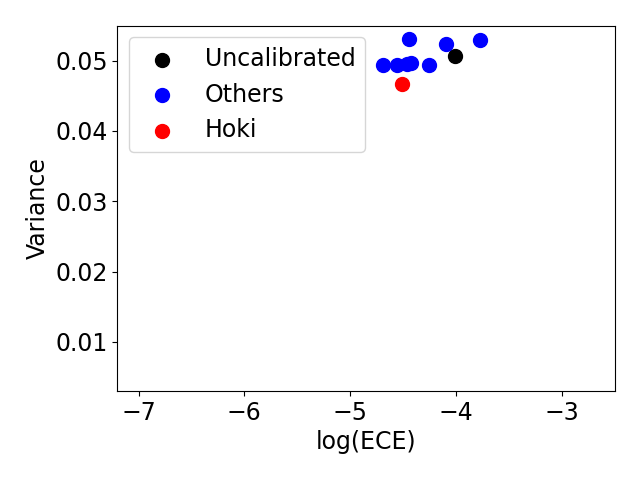}
    \caption{LeNet5}
    \label{fig:ece_var_cifar10_lenet5}
\end{subfigure}
\\
\begin{subfigure}{.49\textwidth}
    \includegraphics[width=0.9\textwidth]{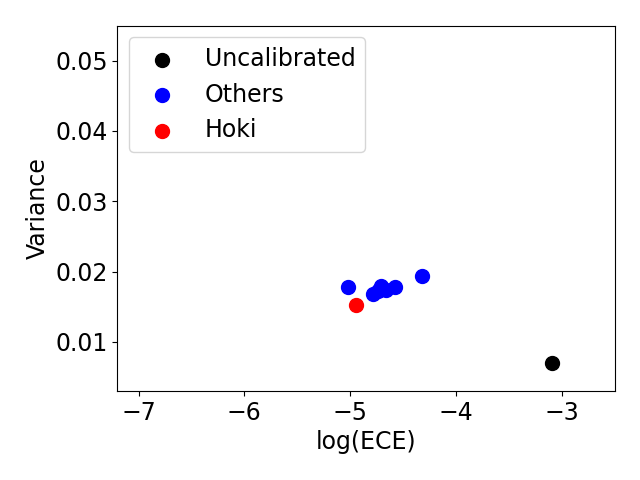}
    \caption{ResNet110}
    \label{fig:ece_var_cifar10_resnet110}
\end{subfigure}
\begin{subfigure}{.49\textwidth}
    \includegraphics[width=0.9\textwidth]{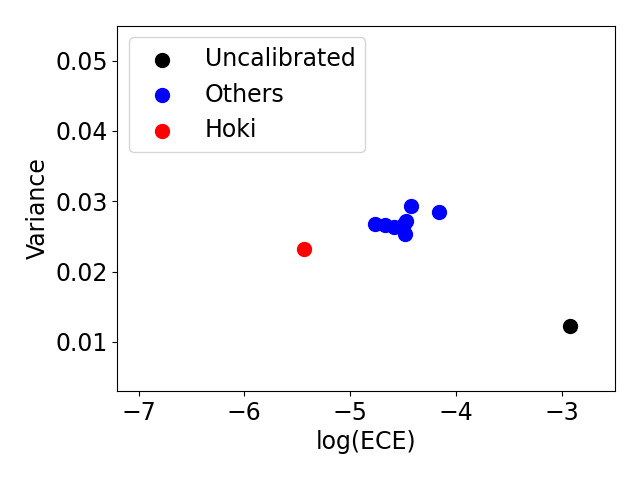}
    \caption{ResNet110 SD}
    \label{fig:ece_var_cifar10_resnet110sd}
\end{subfigure}
\\
\begin{subfigure}{.49\textwidth}
    \includegraphics[width=0.9\textwidth]{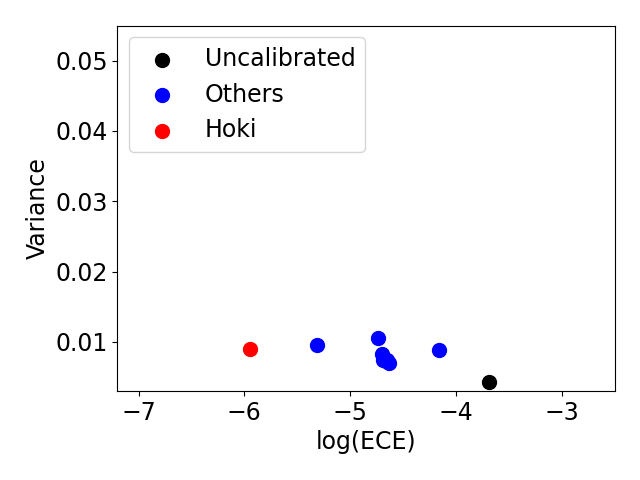}
    \caption{WRN 28-10}
    \label{fig:ece_var_cifar10_wrn28-10}
\end{subfigure}
\caption{log(ECE) vs. Variance on CIFAR10.}
\label{fig:ece_var_cifar10}
\end{figure}

\begin{figure}[!hbt]
    \centering
\begin{subfigure}{.49\textwidth}
    \includegraphics[width=0.9\textwidth]{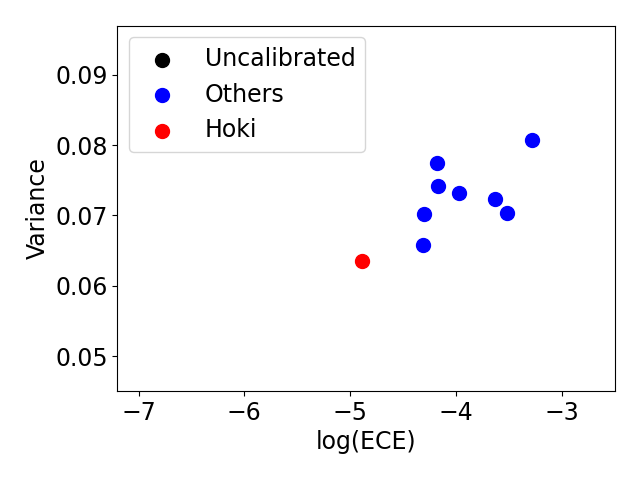}
    \caption{DenseNet40}
    \label{fig:ece_var_cifar100_densenet40}
\end{subfigure}
\begin{subfigure}{.49\textwidth}
    \includegraphics[width=0.9\textwidth]{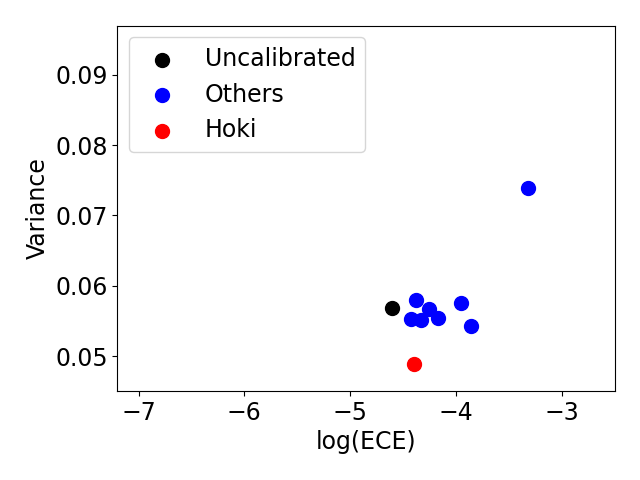}
    \caption{LeNet5}
    \label{fig:ece_var_cifar100_lenet5}
\end{subfigure}
\\
\begin{subfigure}{.49\textwidth}
    \includegraphics[width=0.9\textwidth]{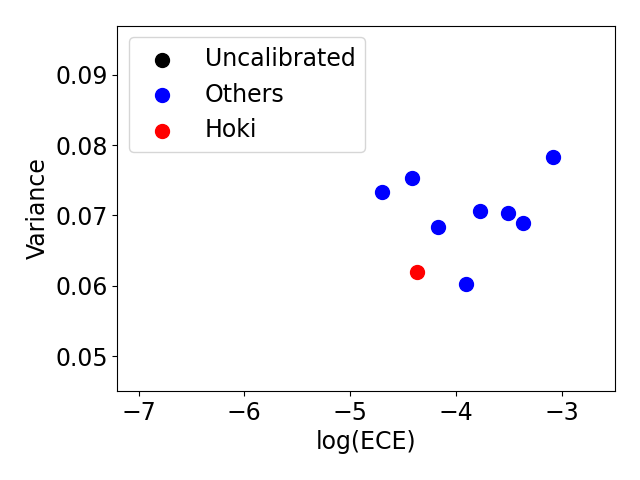}
    \caption{ResNet110}
    \label{fig:ece_var_cifar100_resnet110}
\end{subfigure}
\begin{subfigure}{.49\textwidth}
    \includegraphics[width=0.9\textwidth]{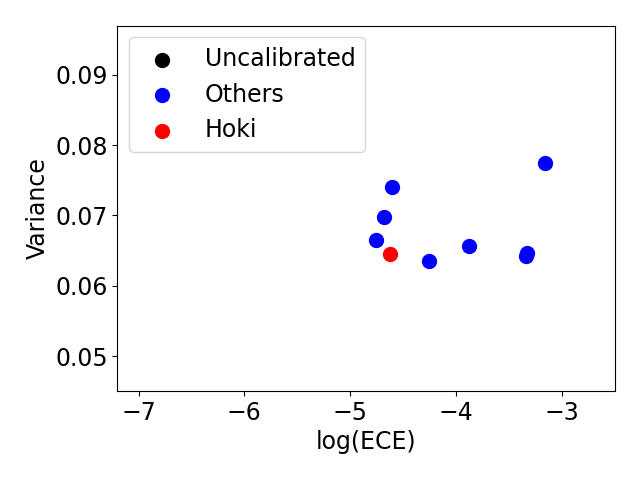}
    \caption{ResNet110 SD}
    \label{fig:ece_var_cifar100_resnet110sd}
\end{subfigure}
\\
\begin{subfigure}{.49\textwidth}
    \includegraphics[width=0.9\textwidth]{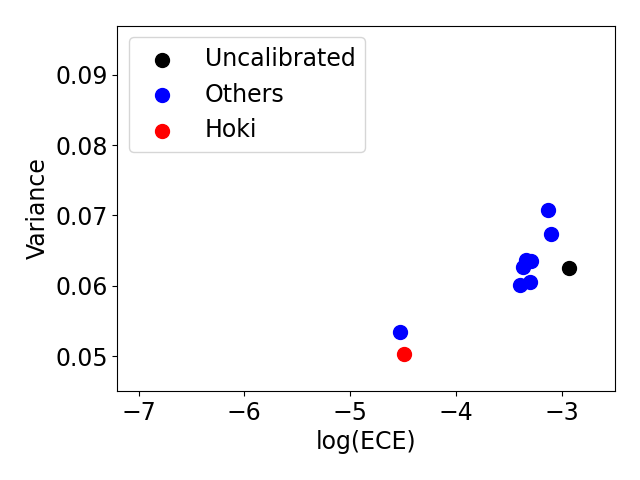}
    \caption{WRN 28-10}
    \label{fig:ece_var_cifar100_wrn28-10}
\end{subfigure}
\caption{log(ECE) vs. Variance on CIFAR100.}
\label{fig:ece_var_cifar100}
\end{figure}

\clearpage

\subsection{Number of bins}
In addition to the general setting for computing ECE (15 bins), we also use different number of bins (5, 10, 30, 50, 100) for better evaluation. The purpose of this evaluation is to show sensitive each algorithm is to the bin size -- a larger sensitivity would imply that an algorithm might not generalize as well on new data. 
As shown in Tables~\ref{tab:ece_5_bins}-\ref{tab:ece_100_bins}, Hoki outperforms other algorithms in many benchmarks except the extreme cases, using 5 bins and 100 bins.
We emphasize that Hoki always produces the best or the second-best performance on the challenging ImageNet except for one case -- ECE using 100 bins with the ResNet152 model. 
This result shows that, although Hoki is calibrated with 15 bins, the calibration is shown to be effective with various bin sizes, which highlights the benefit of calibration using transformations.

\begin{table*}[!hbt]
\caption{ECE using 5 bins}
\label{tab:ece_5_bins}
\centering
\resizebox{\textwidth}{!}{
\begin{sc}\begin{tabular}{llcccccccccc}
\toprule
Dataset & Model & Uncal. & TempS & VecS & \thead{MS-\\ODIR} & \thead{Dir-\\ODIR} & ETS & IRM & \thead{IROvA-\\TS} & ReCal & \ours{}\\
\midrule
MNIST & LeNet 5 & 0.0074 & 0.0014 & 0.0013 & 0.0011 & 0.0016 & 0.0015 & \underline{0.0011} & 0.0015 & 0.0030 & \textbf{0.0008}\\
\midrule
CIFAR10 & DenseNet 40 & 0.0519 & 0.0043 & \textbf{0.0023} & 0.0040 & \underline{0.0035} & 0.0040 & 0.0037 & 0.0038 & 0.0064 & 0.0039\\
CIFAR10 & LeNet 5 & 0.0169 & 0.0074 & \underline{0.0065} & 0.0067 & 0.0068 & 0.0088 & \textbf{0.0065} & 0.0124 & 0.0192 & 0.0110\\
CIFAR10 & ResNet 110 & 0.0450 & 0.0066 & 0.0074 & \underline{0.0046} & 0.0083 & 0.0081 & 0.0051 & 0.0057 & 0.0103 & \textbf{0.0045}\\
CIFAR10 & ResNet 110 SD & 0.0534 & 0.0067 & 0.0053 & 0.0050 & \underline{0.0046} & 0.0091 & 0.0058 & 0.0063 & 0.0137 & \textbf{0.0044}\\
CIFAR10 & WRN 28-10 & 0.0248 & 0.0089 & 0.0095 & 0.0092 & 0.0092 & 0.0087 & 0.0115 & \underline{0.0027} & 0.0085 & \textbf{0.0026}\\
\midrule
CIFAR100 & DenseNet 40 & 0.1728 & 0.0126 & 0.0253 & 0.0285 & 0.0182 & 0.0106 & \underline{0.0085} & 0.0107 & 0.0344 & \textbf{0.0073}\\
CIFAR100 & LeNet 5 & \underline{0.0070} & 0.0206 & 0.0129 & 0.0105 & 0.0097 & 0.0180 & 0.0080 & \textbf{0.0060} & 0.0338 & 0.0123\\
CIFAR100 & ResNet 110 & 0.1422 & \textbf{0.0072} & 0.0300 & 0.0340 & 0.0190 & \underline{0.0109} & 0.0110 & 0.0147 & 0.0371 & 0.0127\\
CIFAR100 & ResNet 110 SD & 0.1229 & 0.0071 & 0.0358 & 0.0354 & 0.0207 & \underline{0.0067} & \textbf{0.0051} & 0.0125 & 0.0418 & 0.0084\\
CIFAR100 & WRN 28-10 & 0.0521 & 0.0422 & 0.0436 & 0.0354 & 0.0327 & 0.0364 & 0.0140 & \textbf{0.0088} & 0.0319 & \underline{0.0111}\\
\midrule
ImageNet & DenseNet 161 & 0.0564 & 0.0191 & 0.0211 & 0.0367 & 0.0477 & 0.0126 & 0.0093 & \underline{0.0068} & 0.0482 & \textbf{0.0043}\\
ImageNet & MobileNet V2 & 0.0266 & 0.0150 & 0.0135 & 0.0191 & 0.0204 & 0.0134 & 0.0060 & \underline{0.0038} & 0.0471 & \textbf{0.0010}\\
ImageNet & ResNet 152 & 0.0490 & 0.0201 & 0.0207 & 0.0347 & 0.0397 & 0.0139 & 0.0112 & \textbf{0.0031} & 0.0455 & \underline{0.0052}\\
ImageNet & WRN 101-2 & 0.0524 & 0.0307 & 0.0300 & 0.0413 & 0.0194 & 0.0245 & 0.0155 & \textbf{0.0065} & 0.0425 & \underline{0.0067}\\
\bottomrule
\end{tabular}
\end{sc}
}
\end{table*}

\begin{table*}[!hbt]
\caption{ECE using 10 bins}
\label{tab:ece_10_bins}
\centering
\resizebox{\textwidth}{!}{
\begin{sc}\begin{tabular}{llcccccccccc}
\toprule
Dataset & Model & Uncal. & TempS & VecS & \thead{MS-\\ODIR} & \thead{Dir-\\ODIR} & ETS & IRM & \thead{IROvA-\\TS} & ReCal & \ours{}\\
\midrule
MNIST & LeNet 5 & 0.0076 & 0.0014 & 0.0013 & \underline{0.0011} & 0.0016 & 0.0016 & 0.0014 & 0.0019 & 0.0030 & \textbf{0.0008}\\
\midrule
CIFAR10 & DenseNet 40 & 0.0519 & 0.0068 & \textbf{0.0029} & 0.0040 & \underline{0.0035} & 0.0056 & 0.0073 & 0.0073 & 0.0094 & 0.0057\\
CIFAR10 & LeNet 5 & 0.0177 & 0.0079 & 0.0076 & \textbf{0.0067} & \underline{0.0068} & 0.0110 & 0.0071 & 0.0142 & 0.0192 & 0.0110\\
CIFAR10 & ResNet 110 & 0.0457 & 0.0079 & 0.0082 & 0.0075 & 0.0083 & 0.0084 & \textbf{0.0054} & 0.0091 & 0.0106 & \underline{0.0071}\\
CIFAR10 & ResNet 110 SD & 0.0535 & 0.0088 & 0.0077 & 0.0050 & \underline{0.0046} & 0.0109 & 0.0072 & 0.0078 & 0.0145 & \textbf{0.0044}\\
CIFAR10 & WRN 28-10 & 0.0251 & 0.0090 & 0.0095 & 0.0092 & 0.0092 & 0.0087 & 0.0147 & \underline{0.0046} & 0.0085 & \textbf{0.0026}\\
\midrule
CIFAR100 & DenseNet 40 & 0.1728 & 0.0152 & 0.0258 & 0.0285 & 0.0182 & 0.0134 & \underline{0.0098} & 0.0124 & 0.0380 & \textbf{0.0073}\\
CIFAR100 & LeNet 5 & 0.0118 & 0.0208 & 0.0129 & \underline{0.0105} & \textbf{0.0097} & 0.0204 & 0.0131 & 0.0119 & 0.0357 & 0.0282\\
CIFAR100 & ResNet 110 & 0.1422 & \textbf{0.0074} & 0.0301 & 0.0340 & 0.0190 & \underline{0.0121} & 0.0138 & 0.0181 & 0.0419 & 0.0127\\
CIFAR100 & ResNet 110 SD & 0.1229 & \underline{0.0077} & 0.0358 & 0.0354 & 0.0207 & 0.0079 & \textbf{0.0066} & 0.0140 & 0.0418 & 0.0098\\
CIFAR100 & WRN 28-10 & 0.0524 & 0.0424 & 0.0446 & 0.0354 & 0.0327 & 0.0388 & 0.0303 & \textbf{0.0106} & 0.0319 & \underline{0.0112}\\
\midrule
ImageNet & DenseNet 161 & 0.0564 & 0.0203 & 0.0236 & 0.0367 & 0.0477 & 0.0126 & 0.0097 & \underline{0.0077} & 0.0486 & \textbf{0.0043}\\
ImageNet & MobileNet V2 & 0.0266 & 0.0156 & 0.0149 & 0.0191 & 0.0204 & 0.0138 & 0.0068 & \underline{0.0064} & 0.0474 & \textbf{0.0010}\\
ImageNet & ResNet 152 & 0.0490 & 0.0201 & 0.0207 & 0.0347 & 0.0397 & 0.0139 & 0.0112 & \underline{0.0055} & 0.0457 & \textbf{0.0052}\\
ImageNet & WRN 101-2 & 0.0524 & 0.0307 & 0.0310 & 0.0413 & 0.0194 & 0.0245 & 0.0155 & \underline{0.0076} & 0.0425 & \textbf{0.0067}\\
\bottomrule
\end{tabular}
\end{sc}
}
\end{table*}

\begin{table*}[!hbt]
\caption{ECE using 30 bins}
\label{tab:ece_30_bins}
\centering
\resizebox{\textwidth}{!}{
\begin{sc}\begin{tabular}{llcccccccccc}
\toprule
Dataset & Model & Uncal. & TempS & VecS & \thead{MS-\\ODIR} & \thead{Dir-\\ODIR} & ETS & IRM & \thead{IROvA-\\TS} & ReCal & \ours{}\\
\midrule
MNIST & LeNet 5 & 0.0078 & \underline{0.0025} & 0.0034 & 0.0037 & 0.0033 & 0.0035 & 0.0030 & 0.0032 & 0.0036 & \textbf{0.0008}\\
\midrule
CIFAR10 & DenseNet 40 & 0.0525 & 0.0100 & \textbf{0.0052} & 0.0088 & 0.0083 & 0.0108 & 0.0103 & 0.0110 & 0.0136 & \underline{0.0057}\\
CIFAR10 & LeNet 5 & 0.0230 & 0.0144 & 0.0153 & 0.0165 & 0.0145 & 0.0158 & \underline{0.0142} & 0.0181 & 0.0283 & \textbf{0.0110}\\
CIFAR10 & ResNet 110 & 0.0458 & 0.0098 & 0.0097 & 0.0098 & \underline{0.0091} & 0.0095 & \textbf{0.0086} & 0.0103 & 0.0156 & 0.0110\\
CIFAR10 & ResNet 110 SD & 0.0538 & 0.0135 & \underline{0.0110} & 0.0133 & \textbf{0.0103} & 0.0136 & 0.0130 & 0.0145 & 0.0172 & 0.0229\\
CIFAR10 & WRN 28-10 & 0.0255 & 0.0110 & 0.0103 & 0.0100 & 0.0105 & 0.0098 & 0.0164 & \underline{0.0051} & 0.0100 & \textbf{0.0026}\\
\midrule
CIFAR100 & DenseNet 40 & 0.1728 & 0.0201 & 0.0317 & 0.0324 & 0.0224 & 0.0179 & 0.0219 & \underline{0.0150} & 0.0386 & \textbf{0.0075}\\
CIFAR100 & LeNet 5 & \underline{0.0174} & 0.0224 & 0.0183 & 0.0192 & 0.0232 & 0.0231 & 0.0197 & \textbf{0.0144} & 0.0373 & 0.0282\\
CIFAR100 & ResNet 110 & 0.1423 & \textbf{0.0122} & 0.0319 & 0.0351 & 0.0231 & 0.0132 & 0.0190 & 0.0223 & 0.0470 & \underline{0.0127}\\
CIFAR100 & ResNet 110 SD & 0.1230 & 0.0119 & 0.0367 & 0.0362 & 0.0208 & 0.0173 & \underline{0.0108} & 0.0187 & 0.0449 & \textbf{0.0098}\\
CIFAR100 & WRN 28-10 & 0.0538 & 0.0449 & 0.0453 & 0.0378 & 0.0358 & 0.0392 & 0.0397 & \underline{0.0162} & 0.0366 & \textbf{0.0112}\\
\midrule
ImageNet & DenseNet 161 & 0.0564 & 0.0203 & 0.0244 & 0.0373 & 0.0477 & 0.0139 & 0.0111 & \underline{0.0105} & 0.0500 & \textbf{0.0085}\\
ImageNet & MobileNet V2 & 0.0280 & 0.0188 & 0.0193 & 0.0230 & 0.0275 & 0.0183 & 0.0110 & \underline{0.0089} & 0.0479 & \textbf{0.0011}\\
ImageNet & ResNet 152 & 0.0494 & 0.0211 & 0.0223 & 0.0359 & 0.0399 & 0.0144 & 0.0150 & \underline{0.0099} & 0.0460 & \textbf{0.0052}\\
ImageNet & WRN 101-2 & 0.0532 & 0.0321 & 0.0330 & 0.0420 & 0.0305 & 0.0262 & 0.0234 & \underline{0.0105} & 0.0428 & \textbf{0.0067}\\
\bottomrule
\end{tabular}
\end{sc}
}
\end{table*}

\begin{table*}[!hbt]
\caption{ECE using 50 bins}
\label{tab:ece_50_bins}
\centering
\resizebox{\textwidth}{!}{
\begin{sc}\begin{tabular}{llcccccccccc}
\toprule
Dataset & Model & Uncal. & TempS & VecS & \thead{MS-\\ODIR} & \thead{Dir-\\ODIR} & ETS & IRM & \thead{IROvA-\\TS} & ReCal & \ours{}\\
\midrule
MNIST & LeNet 5 & 0.0080 & 0.0044 & 0.0039 & 0.0037 & 0.0038 & 0.0032 & 0.0047 & \textbf{0.0026} & 0.0038 & \underline{0.0029}\\
\midrule
CIFAR10 & DenseNet 40 & 0.0524 & 0.0111 & 0.0116 & 0.0108 & \underline{0.0103} & 0.0135 & 0.0111 & 0.0113 & 0.0152 & \textbf{0.0064}\\
CIFAR10 & LeNet 5 & 0.0247 & 0.0182 & 0.0191 & 0.0194 & 0.0208 & 0.0184 & \underline{0.0179} & 0.0207 & 0.0312 & \textbf{0.0128}\\
CIFAR10 & ResNet 110 & 0.0462 & 0.0123 & 0.0113 & 0.0113 & 0.0107 & 0.0108 & \underline{0.0104} & 0.0105 & 0.0172 & \textbf{0.0071}\\
CIFAR10 & ResNet 110 SD & 0.0541 & 0.0146 & 0.0133 & 0.0142 & \textbf{0.0115} & 0.0161 & 0.0152 & \underline{0.0131} & 0.0181 & 0.0168\\
CIFAR10 & WRN 28-10 & 0.0255 & 0.0131 & 0.0120 & 0.0112 & 0.0109 & 0.0105 & 0.0169 & \underline{0.0054} & 0.0122 & \textbf{0.0026}\\
\midrule
CIFAR100 & DenseNet 40 & 0.1728 & 0.0244 & 0.0330 & 0.0339 & 0.0295 & 0.0240 & 0.0270 & \underline{0.0163} & 0.0435 & \textbf{0.0114}\\
CIFAR100 & LeNet 5 & 0.0241 & 0.0261 & \underline{0.0205} & 0.0230 & 0.0270 & 0.0281 & 0.0278 & \textbf{0.0148} & 0.0407 & 0.0455\\
CIFAR100 & ResNet 110 & 0.1423 & \textbf{0.0156} & 0.0322 & 0.0379 & 0.0256 & 0.0197 & 0.0234 & 0.0232 & 0.0515 & \underline{0.0192}\\
CIFAR100 & ResNet 110 SD & 0.1232 & \textbf{0.0164} & 0.0385 & 0.0398 & 0.0227 & 0.0233 & 0.0209 & 0.0221 & 0.0490 & \underline{0.0167}\\
CIFAR100 & WRN 28-10 & 0.0562 & 0.0465 & 0.0470 & 0.0387 & 0.0376 & 0.0413 & 0.0404 & \underline{0.0171} & 0.0387 & \textbf{0.0152}\\
\midrule
ImageNet & DenseNet 161 & 0.0567 & 0.0227 & 0.0257 & 0.0380 & 0.0478 & 0.0176 & 0.0147 & \underline{0.0131} & 0.0504 & \textbf{0.0044}\\
ImageNet & MobileNet V2 & 0.0299 & 0.0187 & 0.0204 & 0.0241 & 0.0287 & 0.0194 & 0.0145 & \underline{0.0094} & 0.0497 & \textbf{0.0016}\\
ImageNet & ResNet 152 & 0.0499 & 0.0239 & 0.0249 & 0.0365 & 0.0399 & 0.0177 & 0.0179 & \textbf{0.0107} & 0.0474 & \underline{0.0177}\\
ImageNet & WRN 101-2 & 0.0544 & 0.0341 & 0.0342 & 0.0428 & 0.0342 & 0.0271 & 0.0212 & \underline{0.0106} & 0.0438 & \textbf{0.0067}\\
\bottomrule
\end{tabular}
\end{sc}
}
\end{table*}

\begin{table*}[!hbt]
\caption{ECE using 100 bins}
\label{tab:ece_100_bins}
\centering
\resizebox{\textwidth}{!}{
\begin{sc}\begin{tabular}{llcccccccccc}
\toprule
Dataset & Model & Uncal. & TempS & VecS & \thead{MS-\\ODIR} & \thead{Dir-\\ODIR} & ETS & IRM & \thead{IROvA-\\TS} & ReCal & \ours{}\\
\midrule
MNIST & LeNet 5 & 0.0086 & 0.0057 & 0.0053 & 0.0052 & 0.0057 & 0.0054 & 0.0062 & \textbf{0.0032} & \underline{0.0045} & 0.0053\\
\midrule
CIFAR10 & DenseNet 40 & 0.0538 & 0.0156 & 0.0166 & 0.0146 & 0.0158 & 0.0176 & 0.0162 & \underline{0.0125} & 0.0171 & \textbf{0.0069}\\
CIFAR10 & LeNet 5 & 0.0285 & 0.0268 & \underline{0.0243} & 0.0259 & 0.0273 & 0.0289 & 0.0279 & \textbf{0.0213} & 0.0353 & 0.0336\\
CIFAR10 & ResNet 110 & 0.0474 & 0.0150 & 0.0148 & 0.0148 & 0.0142 & 0.0141 & 0.0146 & \underline{0.0110} & 0.0208 & \textbf{0.0076}\\
CIFAR10 & ResNet 110 SD & 0.0551 & 0.0196 & 0.0179 & 0.0189 & \underline{0.0165} & 0.0195 & 0.0214 & \textbf{0.0150} & 0.0213 & 0.0282\\
CIFAR10 & WRN 28-10 & 0.0261 & 0.0146 & 0.0143 & 0.0148 & 0.0152 & 0.0121 & 0.0201 & \textbf{0.0061} & 0.0132 & \underline{0.0110}\\
\midrule
CIFAR100 & DenseNet 40 & 0.1731 & 0.0292 & 0.0369 & 0.0393 & 0.0333 & 0.0340 & 0.0329 & \textbf{0.0174} & 0.0518 & \underline{0.0189}\\
CIFAR100 & LeNet 5 & 0.0321 & 0.0346 & 0.0303 & \underline{0.0296} & 0.0297 & 0.0338 & 0.0340 & \textbf{0.0166} & 0.0497 & 0.0455\\
CIFAR100 & ResNet 110 & 0.1425 & \underline{0.0264} & 0.0356 & 0.0416 & 0.0307 & 0.0264 & 0.0304 & \textbf{0.0257} & 0.0562 & 0.0456\\
CIFAR100 & ResNet 110 SD & 0.1235 & 0.0248 & 0.0408 & 0.0424 & 0.0293 & 0.0318 & 0.0281 & \underline{0.0232} & 0.0537 & \textbf{0.0167}\\
CIFAR100 & WRN 28-10 & 0.0596 & 0.0486 & 0.0509 & 0.0455 & 0.0405 & 0.0449 & 0.0439 & \textbf{0.0173} & 0.0452 & \underline{0.0206}\\
\midrule
ImageNet & DenseNet 161 & 0.0575 & 0.0256 & 0.0292 & 0.0392 & 0.0483 & 0.0205 & 0.0195 & \underline{0.0136} & 0.0536 & \textbf{0.0050}\\
ImageNet & MobileNet V2 & 0.0316 & 0.0228 & 0.0242 & 0.0274 & 0.0323 & 0.0248 & 0.0194 & \underline{0.0113} & 0.0538 & \textbf{0.0096}\\
ImageNet & ResNet 152 & 0.0509 & 0.0264 & 0.0281 & 0.0384 & 0.0405 & 0.0214 & \underline{0.0201} & \textbf{0.0129} & 0.0508 & 0.0236\\
ImageNet & WRN 101-2 & 0.0554 & 0.0356 & 0.0373 & 0.0441 & 0.0362 & 0.0308 & 0.0267 & \underline{0.0127} & 0.0467 & \textbf{0.0085}\\
\bottomrule
\end{tabular}
\end{sc}
}
\end{table*}

\clearpage

\subsection{Initialization with Original Uncalibrated Confidence}
We perform an experiment to investigate the effect of different initializations.
In Algorithm \ref{alg:design_time} and \ref{alg:run_time}, we initialize the confidence with the validation set accuracy.
We can also use the original uncalibrated confidence from a classifier as the initial value, and we compare ECE values with those two different initialization.
Table \ref{tab:exp_init} shows that the initialization with the validation set accuracy is always better than the initialization with original uncalibrated confidence except two benchmarks, (CIFAR10, DenseNet40) and (CIFAR100, ResNet110 SD).
This difference illustrates the importance of the initialization of Hoki -- starting from a high-variance initial set of confidences may make it harder to converge to a good local optimum in terms of ECE.

\begin{table*}[!hbt]
\caption{{ECE by different initialization}}
\label{tab:exp_init}
\vskip 0.15in
\begin{sc}\begin{center}
\begin{tabular}{llcccccc}
\toprule
Dataset & Model & \thead{Val\\Accuracy} & \thead{Uncalibrated\\Confidence}\\
\midrule
\multirow{1}{*}{MNIST} & LeNet 5 & \textbf{0.0008} & 0.0018\\
\midrule
\multirow{5}{*}{CIFAR10} & DenseNet 40 & 0.0057 & \textbf{0.0038}\\
 & LeNet 5 & \textbf{0.0110} & 0.0171\\
 & ResNet 110 & \textbf{0.0071} & 0.0093\\
 & ResNet 110 SD & \textbf{0.0044} & 0.0060\\
 & WRN 28-10 & \textbf{0.0026} & 0.0042\\
\midrule
\multirow{5}{*}{CIFAR100} & DenseNet 40 & \textbf{0.0073} & 0.0178\\
 & LeNet 5 & \textbf{0.0123} & 0.0189\\
 & ResNet 110 & \textbf{0.0127} & 0.0157\\
 & ResNet 110 SD & 0.0098 & \textbf{0.0090}\\
 & WRN 28-10 & \textbf{0.0112} & 0.0117\\
\midrule
\multirow{4}{*}{ImageNet} & DenseNet 161 & \textbf{0.0043} & 0.0069\\
 & MobileNet V2 & \textbf{0.0011} & 0.0061\\
 & ResNet 152 & \textbf{0.0052} & 0.0081 \\
 & WRN 101-2 & \textbf{0.0067} & 0.0077\\
\bottomrule
\end{tabular}
\end{center}
\end{sc}
\end{table*}
\clearpage

\section{Computing Environment}
All experiments were run on a server with the specifications described in Table~\ref{tab:computing_spec}.
\begin{table}[!hbt]
    \centering
    \caption{Computing Specification}
    \label{tab:computing_spec}

    \begin{tabular}{cc}
    \toprule
    Item & Specification \\
    \midrule
    CPU &  Intel(R) Xeon(R) Gold 6248 CPU @ 2.50GHz \\
    Memory & 768 GB\\
    GPU & NVIDIA GeForce RTX 2080 Ti\\
    \bottomrule
    \end{tabular}
\end{table}

\end{document}